\documentclass{article}

\PassOptionsToPackage{numbers,sort&compress}{natbib}
    \usepackage[preprint]{neurips_2026}

\usepackage[utf8]{inputenc} 
\usepackage[T1]{fontenc}    
\usepackage{hyperref}       
\usepackage{url}            
\usepackage{booktabs}       
\usepackage{amsfonts}       
\usepackage{nicefrac}       
\usepackage{microtype}      
\usepackage{xcolor}         
\usepackage{subcaption}
\usepackage{float}
\usepackage{enumitem}

\usepackage{amsmath}
\usepackage{amsthm}
\usepackage{graphicx}
\usepackage{tikz}

\newtheorem{theorem}{Theorem}[section]
\newtheorem{lemma}[theorem]{Lemma}

\newtheorem{assumption}{Assumption}

\title{Does Weight Decay Enhance Training Stability?}

\author{%
  Marius Saether\thanks{Equal Contribution. Author ordering determined by match-box flip over Zoom.}\\
  NTNU, MIT\\
  \texttt{masaet@mit.edu}\\
  \And
  Amir Kolic$^*$\\
  MIT\\
  \texttt{kolic@mit.edu}\\
  \And
  Tomaso Poggio \\
  MIT\\
  \texttt{tp@ai.mit.edu}\\
  \And
  Pierfrancesco Beneventano\\
  MIT\\
  \texttt{pierb@mit.edu}\\
}

\begin{document}

\maketitle

\begin{abstract}
In modern deep learning, weight decay is often credited with "stabilizing" training dynamics, diverging from its classical role as a static regularization penalty. We investigate a fundamental question: \emph{does weight decay stabilize training dynamics, and if so, through which mechanism?} Indeed, training stability is understood through different but related notions in the literature. We consider how weight decay affects the parameter-space dynamics and loss sharpness by analyzing its effects at the \emph{Edge of Stability} (EoS). We show that weight decay robustly slows \emph{progressive sharpening}. Furthermore, we uncover a striking architecture-dependent phase transition. In CNNs, weight decay dampens the oscillations at the EoS, while in MLPs, increasing weight decay causes a phase transition in which the sharpness stabilizes at a threshold significantly below the theoretical $\frac{2}{\eta}$ boundary. We develop a mathematical framework that accurately models these phenomena and identify the global alignment of the parameter vector and the sharpness gradient as the mechanistic driver of the phase transition. Importantly, we show that these phenomena translate into stability in terms of search in function-space (NTK). Last, this shows that curvature thresholds obtained from convex/quadratic heuristics may not be reliable stability diagnostics under regularization.
\end{abstract}

\section{Introduction}
\label{sec:introduction}
Rooted in classical Tikhonov regularization \citep{tikhonov} and the theory developed for ridge regression \citep{hoerl}, weight decay was historically viewed as a static penalty that restricts the hypothesis space and ensures well-posedness \citep{hanson, krogh}. Several papers have recently established the role of weight decay in picking specific features and solutions \citep{soudry, beneventano_implicit, jacobs, galanti, chen_lowrank, zangrando, yunis}. However, with the establishment of AdamW \citep{loshchilov} as the standard for pretraining, 
\begin{center}
    \textit{practitioners routinely credit weight decay for "stabilizing training"}.
\end{center}
In this paper, we investigate this hypothesis and the optimization folklore surrounding it.

Stability in neural networks generally refers to the stability of underlying dynamical systems. Three distinct interacting systems are relevant during training: \emph{the input propagation through the network}, \emph{the training process in function space}, and \emph{the training process in parameter space}. These correspond respectively to controlling the pre-activation scale (which led to the introduction of $\mu P$ \citep{mup}), bounding the Neural Tangent Kernel (NTK) \citep{ntk}, and bounding the loss curvature. 

We establish that \emph{weight decay heavily affects curvature through a strong phase transition mechanism}, an effect distinct from, and independent of, its role in feature selection as documented by prior literature. The mechanism operates by lowering the threshold around which the loss sharpness oscillates, which in turn governs the other two notions of stability described above.

More technically, a number of works showed that training neural networks happens at the \textit{Edge of Stability} (EoS): the relevant measure of curvature for the optimizer stabilizes around its own instability threshold \citep{xing_walk_2018,jastr_relation,jastrzebski,cohen,cohen_adaptive,andreyev_beneventano,andreyev_momentum,islamov}. For gradient descent (GD) with step size $\eta>0$ this resolves in the Hessian top eigenvalue $\lambda_{\max}\left(\nabla^2L\right)$ hovering at $\frac{2}{\eta}$ \citep{cohen}. Heuristically, one would expect that adding weight decay with parameter $\gamma > 0$ would impact $\lambda_{\max}\left(\nabla^2L\right)$ to hover at $\frac{2}{\eta} -\gamma$, and this was claimed in previous works. However, we show that \textit{above a certain critical $\bar \gamma > 0$, weight decay qualitatively changes progressive sharpening and level of stabilization.} For example, when $\gamma =0.02$ for an MLP trained on \texttt{cifar10-5k} subset with GD ($\eta = 2\times10^{-2}$) the stabilization level is $\lambda_{\max}\left(\nabla^2L\right)\approx 80$ instead of $\frac{2}{\eta}-\gamma=99.98$. Importantly, increasing $\gamma$ over $\bar \gamma$ doesn't harm test loss, and training often happens in that regime, thus this phase transition is benign. 

\begin{figure}[htbp]
    \centering
    \includegraphics[width=0.9\textwidth]{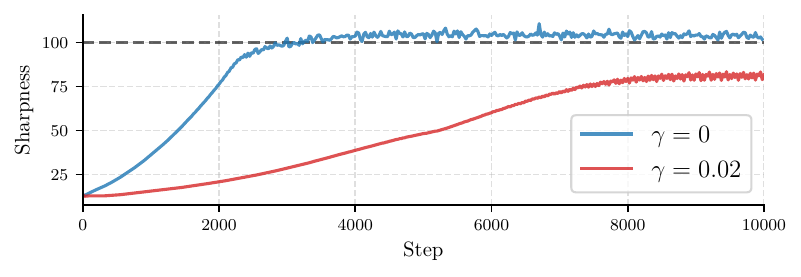}
    \caption{
    The evolution of the sharpness for varying weight decay values $\gamma$ on an MLP with ReLU activations trained with step size $\eta=0.02$. Weight decay \emph{reduces progressive sharpening strength} and \emph{shrinks the stabilization level} significantly below the expected $\frac{2}{\eta}-\gamma$ threshold.}
    \label{fig:intro_relu_phaseshift}
\end{figure}

We completely characterize this behavior mathematically and argue that this is due to the fact that \emph{weight decay introduces global interaction terms into the gradient update}, coupling the optimizer's local geometry to the global position of the parameter vector.

Specifically, our contributions are as follows:
\begin{itemize}[leftmargin=1em,itemsep=0.6em,topsep=0.15em,parsep=0pt]
    \item \textbf{Training still happens at the EoS.} We establish that training with weight decay continues to operate at the Edge of Stability. However, we clarify that this occurs not in the classical sense, but rather as defined by Andreyev and Beneventano \citep{andreyev_beneventano}, operating continuously at the boundary of local instability. Crucially, we demonstrate that observing a maximum eigenvalue significantly below the naive regularized threshold ($\lambda_{max} \ll \frac{2}{\eta} - \gamma$) does not imply that the training dynamics have become intrinsically stable.
    
    \item \textbf{Characterization of the effects of weight decay on Edge of Stability dynamics.} We empirically demonstrate that weight decay consistently slows progressive sharpening, and delays the onset of EoS. In CNNs, it also heavily dampen sharpness oscillations. Most notably, in MLPs, we identify a phase transition in which increasing weight decay beyond a critical threshold causes the stabilization level to shrink significantly below the expected theoretical boundary.
    
    \item \textbf{Introducing a model of weight decay at the EoS that captures empirical observations.} Building upon the self-stabilization framework of \citep{damian_selfstab}, we model gradient descent with weight decay as an underdamped harmonic oscillator. By incorporating a global alignment term, our model accurately predicts the observed damping and limit cycle collapse. This framework mathematically argues that the slowdown in progressive sharpening and the MLP phase transition occur precisely because weight decay introduces non-local dynamics, forcing global interactions to dictate the evolution of the local geometry.
    
    \item \textbf{This, however, stabilizes the dynamics in function space.} Building on our dynamical model, we show that the depressed sharpness threshold induced by weight decay has direct implications for function-space stability. Specifically, we establish a structural link demonstrating that this lower stabilization threshold restricts the spectrum of the empirical Neural Tangent Kernel (NTK). 
    \item \textbf{Curvature thresholds may not be reliable stability diagnostics.}
    We show that weight decay can drive the observed sharpness far below the naive regularized threshold $\frac{2}{\eta}-\gamma$ while training still exhibits characteristic EoS behavior.
    This implies that the stability cannot always be diagnosed by curvature/sharpness being lower than a threshold given by quadratics/convex heuristics.
\end{itemize}

\begin{figure}[htbp]
    \centering
    \includegraphics[width=0.8\textwidth]{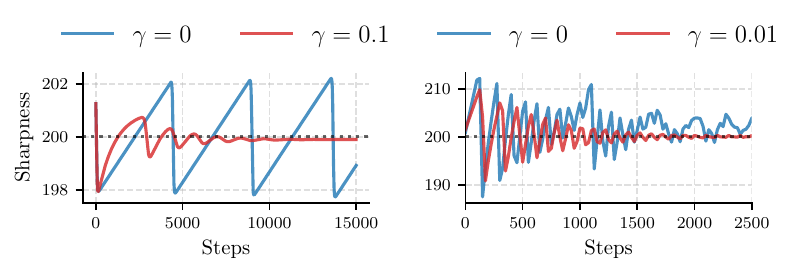}
    
    \caption{Weight decay \emph{dampens the oscillations} at the EoS. On the left, the dampening for a toy loss model along with a visible $\gamma=0.1$ shift in the stabilization threshold. On the right, a CNN trained on a \texttt{cifar10-5k} subset, showing the behavior of a dampened harmonic oscillator for $\gamma=0.01$. Both had a learning rate of $\eta=0.01$, and the dashed line is at $\frac{2}{\eta}$. The chaotic behavior of $\gamma=0$ after step $\approx 1000$ for the CNN is due to multiple eigenvalues at the EoS (Appendix \ref{app:multi_eigenvalues}).}
    \label{fig:intro_dampening}
\end{figure}

\section{Related Work}
\label{sec:related_work}
\paragraph{Effects of Weight Decay} Classical work frames weight decay as a static penalty for ill-posed inverse problems \citep{tikhonov,hoerl}. In neural networks, it restricts the hypothesis space, penalizes model complexity, and ensures well-posedness to improve generalization \citep{hanson, krogh}. However, in modern deep learning, weight decay is used primarily as a trajectory stabilizer \citep{dangelo, laarhoven}. For example, \citep{xie} introduces a weight decay scheduler that avoids large gradient-norms, while \citep{yun} explores weight decay scheduling and its effects on model performance. \citep{golatkar, bjorck} highlight the importance of weight decay in the early phases of training. \citep{kosson_rotational} shows that weight decay causes the magnitude and angular updates of weight vectors to converge to a steady state, improving stability. Our paper expands on the literature exploring weight decay as a dynamic stabilizer by analyzing its effect on loss sharpness. We further demonstrate how these sharpness effects translate into broader notions of training stability.

\paragraph{Edge of Stability}
As previously noted, neural network training typically enters the Edge of Stability (EoS) following a phase of \textit{progressive sharpening} \citep{jastrzebski, cohen}. Once in this highly oscillatory regime, local self-stabilizing dynamics prevent the optimizer from completely diverging \citep{damian_selfstab}, a macroscopic trajectory recently formalized via the central flow framework \citep{central_flow}. While extensive literature has mapped how various optimization choices such as stochasticity, momentum, and non-Euclidean geometries perturb these stability boundaries \citep{andreyev_beneventano, andreyev_momentum, islamov}, the specific interaction with weight decay remains underexplored. Most relevantly, \citep{lyu} shows that in scale-invariant networks weight decay balances parameter norm increases and lowers the \emph{spherical sharpness} of the loss, characterizing the trajectory via a continuous sharpness-reduction flow. Our work studies the effect of weight decay on EoS in general architectures, developing a mathematical model that accounts for all of our observed empirical phenomena.

\paragraph{Notions of Stability}
As Figure \ref{fig:big_picture_story} summarizes, training stability is explored through three intertwined lenses: the parameter-space landscape (EoS), function-space dynamics (NTK), and signal propagation scaling ($\mu$P). While the NTK \citep{ntk} governs prediction dynamics and active feature learning, $\mu$P ensures $\mathcal{O}(1)$ stability for activations and gradients across widths \citep{mup}. Recent works deeply connect these regimes: progressive sharpening to the EoS is intrinsically tied to the NTK's largest eigenvalue, with subsequent EoS sharpness reductions driving NTK target alignment \citep{noci, ntk_eos, lauditi2026spectral}, this links local sharpness reductions to feature alignment. Furthermore, \citep{wdmup} finds that when $\mu$P's alignment assumptions fail early in practice, independent weight decay takes over to stabilize $\mathcal{O}(1)$ updates and facilitate learning rate transfer. By analyzing weight decay at the EoS, our work unifies these perspectives, mathematically modeling how weight decay modulates sharpness, NTK evolution, and broader stability.
\begin{figure}[htbp]
\centering
\begin{tikzpicture}
  \node (EoS) at (0, 2.4) {EoS};
  \node (NTK) at (6, 2.4) {NTK};
  \node (WD) at (0, 0) {WD};
  \node (muP) at (6, 0) {$\mu$P};

  \draw[<->] (EoS) -- node[above=2pt] {\citep{ntk_eos, noci}} (NTK);
  \draw[<->] (NTK) -- node[right=6pt] {\citep{mup}} (muP);
  \draw[->, color=blue] (WD) -- node[left=10pt] {\textcolor{blue}{Our work}} (EoS);
  \draw[->, color=blue] (WD) --  (NTK);
  \draw[<-] (muP) -- node[below=2pt] {\citep{wdmup}} (WD);
\end{tikzpicture}
\caption{A diagram illustrating three intertwined notions of training stability and weight decay's role as a trajectory stabilizer.}
\label{fig:big_picture_story}
\end{figure}

\section{Weight Decay Empirically Changes EoS Dynamics}
\label{sec:empirical}

The effect of weight decay at the Edge of Stability (EoS) can naively be estimated by considering the regularized loss $\tilde{L}=L+\frac{\gamma}{2}||\theta||^2$. Since the effective sharpness is increased by $\gamma$, one might expect the EoS threshold to simply shift down to $\frac{2}{\eta}-\gamma$. However, we observe more complex dynamics that depend non-linearly on the weight decay. In this section, we empirically establish three key phenomena: \textbf{1.} Weight decay robustly slows down progressive sharpening across all models. \textbf{2.} Once at the EoS, weight decay induces architecture-dependent behaviors. Specifically, it dampens the sharpness oscillations in CNNs, while in MLPs it triggers a phase transition to a dramatically lower stabilizing sharpness. \textbf{3.} Crucially, this does not mean weight decay "stabilizes" training by avoiding the EoS. Even when the phase transition shifts the sharpness to a much lower baseline, \textit{the network continues to train in the EoS regime} (e.g., exhibiting non-monotonic loss decreases and persistent gradient oscillations). Thus, \emph{weight decay does not stabilize the trajectory from a purely curvature-based perspective}, but rather modulates and lowers the amplitude of sharpness oscillations.

\paragraph{Progressive Sharpening Slowdown.} 
Across architectures, we find that weight decay fundamentally slows the rate of progressive sharpening. As illustrated in Figure \ref{fig:empirical_combined}, increasing $\gamma$ significantly delays the onset of the EoS and visibly flattens the sharpness trajectory during the early phases of training for both MLPs and CNNs.

\paragraph{Dampening Oscillations vs. Phase Transitions.}
Once the network enters the EoS, the effects of weight decay diverge based on the architecture. In CNNs, weight decay acts primarily to dampen oscillations. As shown in Figure \ref{fig:empirical_combined}, for larger values of $\gamma$, the sharpness stabilizes faster and with significantly reduced oscillatory amplitude after reaching the $\frac{2}{\eta}$ threshold.

In MLPs with ReLU activations trained with full-batch gradient descent, weight decay induces a distinct phase transition in the EoS dynamics. Below a critical threshold ($\gamma < \bar{\gamma}$), the stabilizing sharpness remains largely unaffected. However, for $\gamma > \bar{\gamma}$, the sharpness stabilizes at progressively lower values, well below the conventional $\frac{2}{\eta}-\gamma$ boundary. Importantly, despite this dramatically lowered threshold, the dynamics retain characteristic EoS signatures, including sharpness oscillations and non-monotonic loss steps (Appendix \ref{EoS_MLP_osc}).

\begin{figure}
    \centering
    \includegraphics[width=\textwidth]{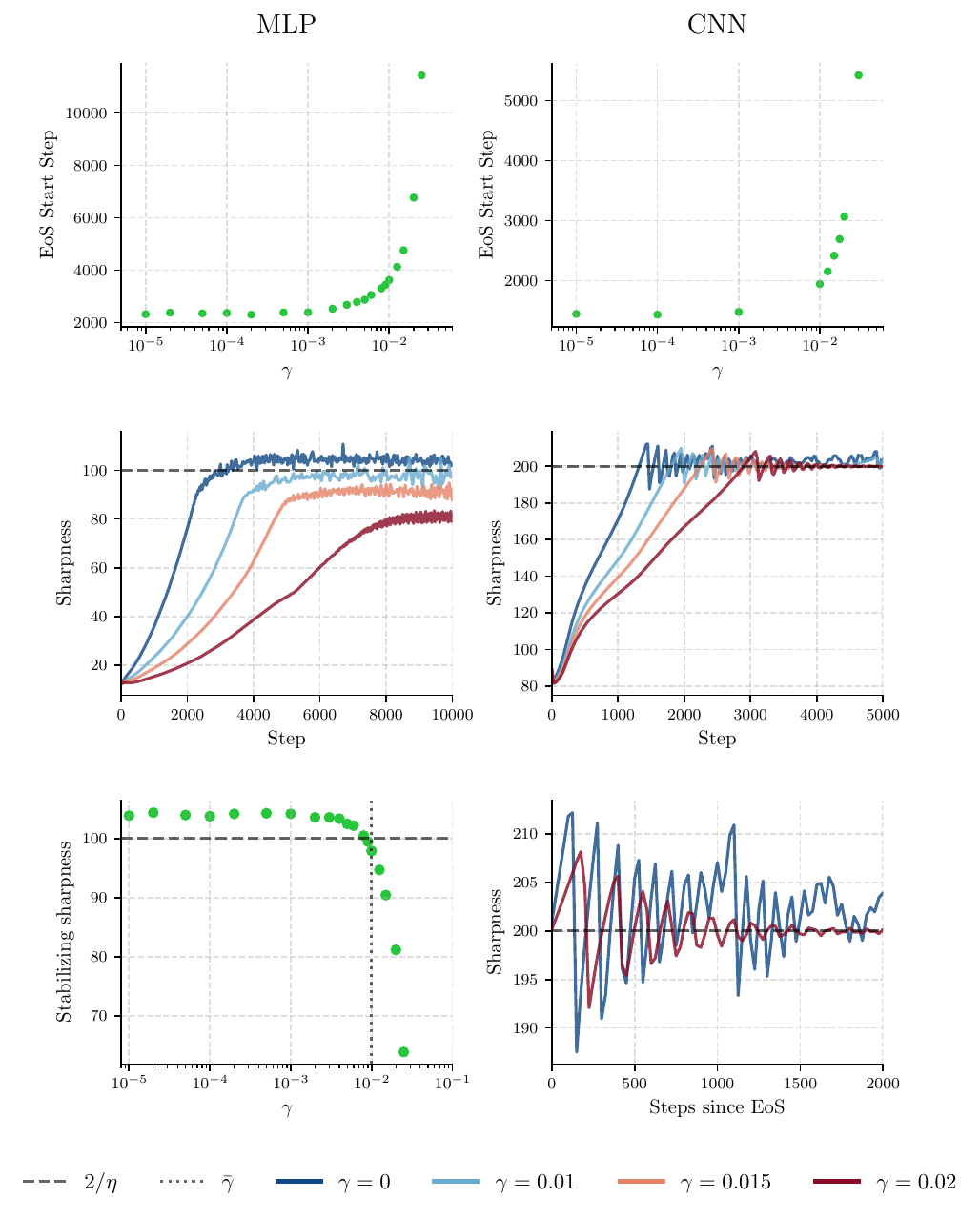}
    \caption{Empirical evaluation of weight decay's effect on Edge of Stability (EoS) dynamics for MLPs (left column, $\eta=0.02$) and CNNs (right column, $\eta=0.01$). The dashed black line represents the theoretical $\frac{2}{\eta}$ threshold. \textbf{Top Row:} The EoS onset (first step of non-monotonic loss) as a function of weight decay $\gamma$. Increasing $\gamma$ consistently delays entry into the EoS across both architectures. \textbf{Middle Row:} The evolution of loss sharpness over training steps for varying $\gamma$. The trajectories visually demonstrate the slowdown of progressive sharpening, as well as the oscillatory dynamics of the EoS regime and where the stabilization threshold is. \textbf{Bottom Row:} Architecture-dependent behaviors. For MLPs (left), the stabilizing sharpness is plotted against $\gamma$, revealing a phase transition where sharpness stabilizes significantly below $\frac{2}{\eta}-\gamma$ for $\gamma > \bar{\gamma}$. For CNNs (right), comparing the dynamics immediately following EoS entry shows that weight decay $\gamma=0.02$ strongly dampens the oscillations at the EoS compared to those observed in training with $\gamma=0$.}
    \label{fig:empirical_combined}
\end{figure}

\section{The Mechanism is Global Interaction}
\label{sec:model}

Here we model how a weight decay of $\gamma$ affects the EoS phenomena. Our analysis shows that the sharpness at the EoS can be modeled as a dampened harmonic oscillator, with weight decay being the dampening factor. Notably, this single analytical framework is able to account for practically all of the empirical phenomena observed in Section \ref{sec:empirical}. While the behavior of optimizers is often analyzed locally, we show that the striking phase shift behavior that weight decay induces at the EoS is caused by a global interaction term, related to the parameter vector. In particular, we demonstrate that when the global interaction term $\gamma\langle \theta, \nabla S\rangle$ crosses a specific critical threshold, it causes a phase shift and the sharpness stabilizes below the naive $\frac{2}{\eta}-\gamma$ threshold. This is supported by empirical evidence measuring alignment with the $\nabla S$ vector in varying architectures. The full derivation is given in Appendix \ref{app:model}.

\subsection{Local Friction and the Underdamped Oscillator}

Let the sharpness at $\theta$ be $S(\theta)=\lambda_{\max}\left(\nabla^2L(\theta)\right)$ and let $u(\theta)$ be the corresponding eigenvector $\nabla^2L(\theta)u(\theta)=S(\theta)u(\theta)$. To understand the dynamics, we analyze gradient descent around a fixed $\theta^*$ in the local basis spanned by the maximum sharpness direction $x_t = u \cdot (\theta_t - \theta^*)$ and the progressive sharpening direction $y_t = \nabla S \cdot (\theta_t - \theta^*)$, where we let $S=S(\theta^*),u=u(\theta^*)$. We assume that throughout training, the \emph{progressive sharpening} coefficient $\alpha=-\langle\nabla L,\nabla S\rangle$ satisfies $\alpha>0$, i.e., there is a progressive sharpening force. We assume that the dynamics are governed by a single top eigenvalue at the EoS (Appendix \ref{app:multi_eigenvalues}). Without weight decay, the optimization at the EoS forms a stable negative feedback loop. The sharpness grows due to progressive sharpening, but is forcefully bounded when the unstable bounces in $x_t$ become large enough to trigger third-order stabilization effects \citep{damian_selfstab}.

When a weight decay penalty of $\gamma$ is introduced, the update rule naturally spawns local and global friction terms. The dynamical system governing the discrete steps can be simplified to:
\begin{align}
    x_{t+1} &\approx -x_t(1 + \eta y_t + \eta\gamma) \label{eq:x_friction} -\eta\gamma c_x \\
    y_{t+1} &\approx y_t(1 - \eta\gamma) + \eta\left(\alpha - \beta \frac{x_t^2}{2}\right) -\eta\gamma c_y\label{eq:y_friction}
\end{align}
with the global terms $c_x=u\cdot\theta^*$ and $c_y=\nabla S\cdot\theta^*$. We can consider only the local dynamics and assume that $c_x\approx0, c_y\approx0$. This yields a dynamical system that we analyze in \ref{app:model}. The solution to the linearized system shows how the weight decay acts as a linear damper, with sharpness oscillations decaying exponentially with respect to $\gamma$.

\begin{figure}[htbp]
    \centering
    \includegraphics[width=0.8\textwidth]{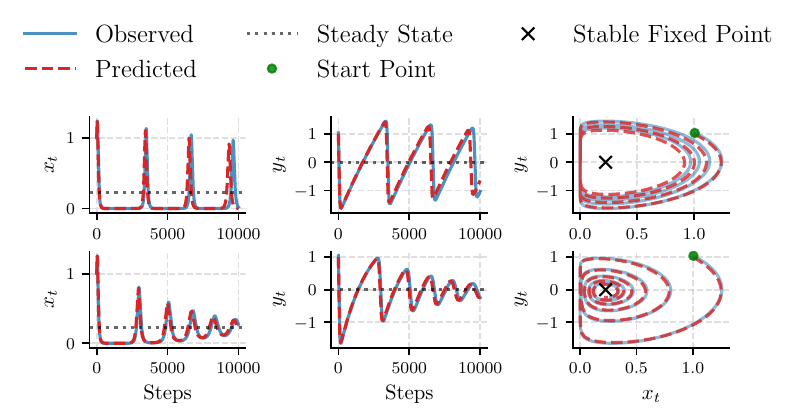}
    \caption{The model predicts the dynamics of $x_t$ and $y_t$ on the left and in the middle column, respectively. The $x_t,y_t$ phase space is shown on the right. Increasing the weight decay introduces a stronger dampening factor, and we see that the sharpness oscillations decay faster. The $y_t$ resting threshold is also shifted by $-\gamma$.}
    \label{fig:model_accuracy}
\end{figure}

This formulation exactly predicts two of our empirical observations. Firstly, weight decay \emph{dampens the oscillations at the EoS}, which decay exponentially with a factor proportional to $\gamma$. This matches the highly dampened, settling oscillations observed in CNNs (Figure \ref{fig:empirical_combined}). Secondly, below the phase shift boundary $\gamma<\bar{\gamma}$, the stabilization sharpness is shifted by a factor $\gamma$ to $S(\theta) = \frac{2}{\eta} - \gamma$. 

\subsection{Global Offsets and Limit Cycle Collapse}

While the local friction explains the damped oscillations, it does not explain the drastic phase transition where we observe EoS at a much lower threshold (Figure \ref{fig:empirical_combined}). To account for this, we include the global decay offsets $c_x$ and $c_y$. While the system can harmlessly absorb $c_x$ by slightly "shifting the center" of its bounces along the valley of the trajectory, $c_y$ fundamentally alters the underlying sharpening force and predicts phase transitions for a large enough interaction term.

\begin{figure}[htbp]
    \centering
    \includegraphics[width=0.6\linewidth]{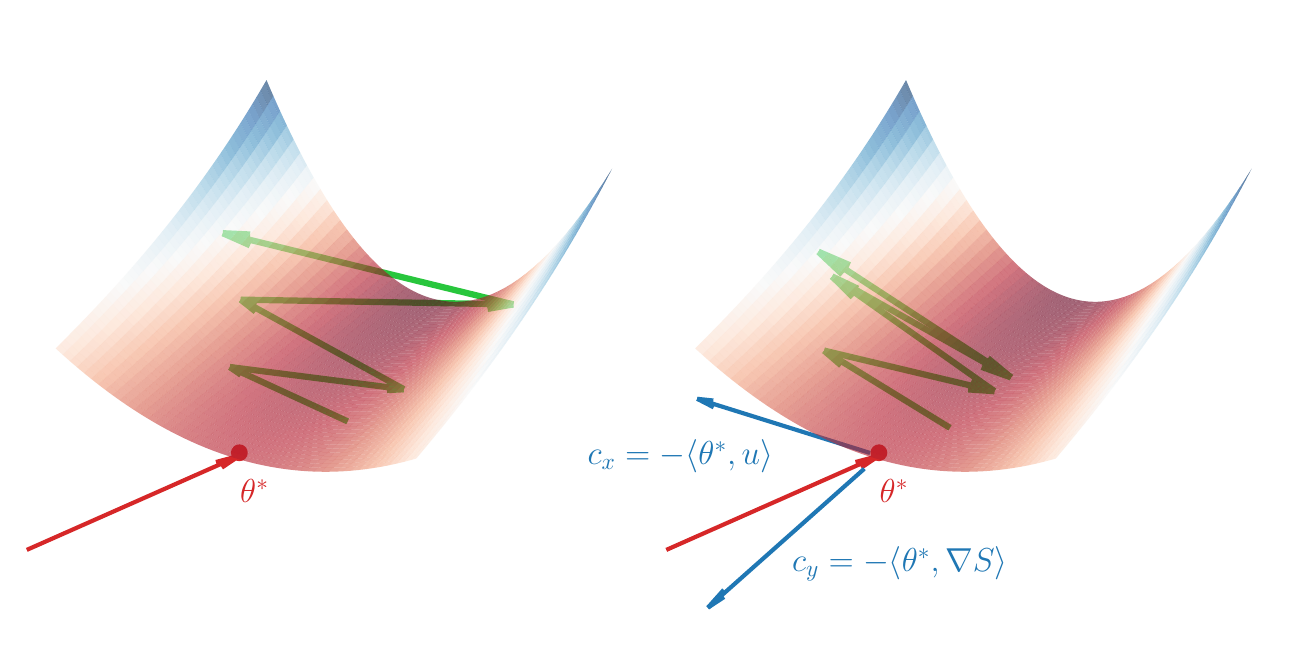}
    \caption{An illustration of a simplified mental model of EoS introduces optimization along a sharpening valley. The $c_x$ interaction term shifts the balance of the $x_t$ oscillations slightly in one direction, while $c_y$ fundamentally changes where the sharpness oscillates.}
    \label{fig:model_simplified_global}
\end{figure}

By averaging the sharpness update over a full two-step oscillation cycle, we find that the mean of the $x_t$ bounces is given by $-\frac{\eta\gamma c_x}{2}$, and resting amplitude of the unstable bounces, $\Delta$, is given by:
\begin{equation}
    \Delta = \sqrt{\frac{2(\alpha + \gamma^2 - \gamma c_y)}{\beta}}
\end{equation}

If the global offset $c_y$ is large enough, the amplitude squared ($\Delta^2$) would be negative. Therefore, this system possesses a hard critical bifurcation threshold:
\begin{equation}
    c_y^{\text{crit}} = \frac{\alpha}{\gamma} + \gamma
\end{equation}

While this local derivation assumes a fixed reference point $\theta^*$ to isolate the oscillator mechanics, the true optimization trajectory drifts continuously along the loss manifold. Consequently, the global interaction term itself evolves. If the network's trajectory causes $c_y$ to cross the critical threshold $c^{\text{crit}}_{y}$ during progressive sharpening, the stabilizing sharpness plunges much lower than $\frac{2}{\eta}-\gamma$.

This theoretical collapse is supported by the empirical phase transitions observed. In Figure \ref{fig:model_phaseshift} we see that for MLPs trained with sufficient weight decay, the global offset $c_y$ term crosses the $c_y^{\text{crit}}$ threshold before the EoS, forcing the network to stabilize at a heavily reduced curvature. For CNNs, however, the global interaction term remains below the critical threshold during the entire progressive sharpening phase.

\begin{figure}[htbp]
    \centering
    \includegraphics[width=0.8\textwidth]{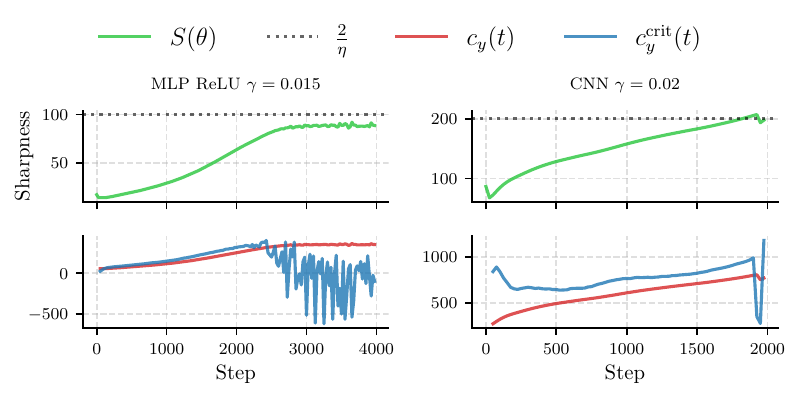}
    \caption{The sharpness dynamics (top) and the evolution of the $c_y$ global interaction term compared to the progressive sharpening coefficient dependent critical threshold $c^{\text{crit}}_y$ during training (bottom). We compare an MLP ReLU with $\eta=0.02,\gamma=0.015$ (left) and a CNN with $\eta=0.01,\gamma=0.02$. The first few steps in the $c_y$ plots are ignored for plotting as $\alpha\ll0$ (note the initial sharpness drop).}
    \label{fig:model_phaseshift}
\end{figure}

\section{How Weight Decay and EoS Stabilize the Training}
\label{sec:ntk}

Our preceding analysis demonstrates that while weight decay does not prevent training at the Edge of Stability (EoS), the phase transition it induces, lowering the sharpness threshold significantly, naturally translates into improved function-space stability. Without regularization, unconstrained parameter growth yields excessively sharp functional representations. Bounding the NTK spectrum may therefore contribute to stable signal propagation (the core objective of frameworks like $\mu$P) and robust feature learning.

Let $f(\theta, X) \in \mathbb{R}^N$ be the network outputs over an empirical batch of size $N$. The function-space dynamics are governed by the empirical NTK, $\Theta = J J^T \in \mathbb{R}^{N \times N}$, where $J \in \mathbb{R}^{N \times P}$ is the Jacobian. Under MSE loss, the Hessian $H$ decomposes exactly as:
\begin{equation}
    H = J^T J + \sum_{i=1}^N (f(\theta, x_i) - y_i)\nabla^2 f(\theta, x_i) = J^TJ + R
\end{equation}
Because $J^TJ$ and $\Theta$ share the same non-zero spectrum, Weyl's inequality bounds the distance between their largest eigenvalues by the spectral norm of the residual $R$:
\begin{equation}
\label{eq:ntk_eos_bound}
    |\lambda_{\max}(H) - \lambda_{\max}(\Theta)| \le \|R\|_2
\end{equation}
As the network minimizes the empirical risk, $\|R\|_2$ approaches zero. In this regime, the Hessian's decreased EoS threshold tightly bounds the NTK's spectral radius. However, the benefit of a lower sharpness threshold extends beyond simply capping this radius; it \emph{actively improves feature learning}. \citep{ntk_eos} demonstrates that periods of sharpness reduction during EoS are fundamentally linked to an increase in Kernel Target Alignment (KTA). Assuming an approximate rank-1 NTK structure during training, $K_t \approx c_t^2 X^T X + \|v_t\|^2 \hat{y}\hat{y}^T$ (where $c_t^2$ scales sharpness and $v_t$ tracks network output), we extract the following lemma:

\begin{lemma}[Adapted from Theorem 1 and Lemma 3 of \citep{ntk_eos}]
Let $\alpha_t := \frac{\|v_t\|^2}{c_t^2}$ be the ratio between the output norm component and the sharpness component of the NTK. Under mild regularity conditions on the residual, a reduction in the sharpness threshold yields an overall increase in $\alpha_t$. Furthermore, for an appropriate range, increasing $\alpha_t$ shifts the maximum alignment of the target vector $Y$ towards the leading eigenvectors of the NTK.
\label{lemma:ntk_connection}
\end{lemma}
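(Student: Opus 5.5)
The proof runs in two independent parts, mirroring the two claims of Lemma \ref{lemma:ntk_connection}. The first links the sharpness threshold to $\alpha_t$; the second links $\alpha_t$ to target alignment. Throughout, I work in the approximate rank-one model $K_t \approx c_t^2 X^TX + \|v_t\|^2\hat y\hat y^T$ of \citep{ntk_eos}. I treat their Theorem 1 and Lemma 3 as black boxes, checking only that their hypotheses hold in our setting.

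\textbf{Step 1: sharpness controls $c_t^2$.} Under MSE loss, \eqref{eq:ntk_eos_bound} gives $|\lambda_{\max}(H)-\lambda_{\max}(\Theta)|\le\|R\|_2$. The "mild regularity conditions on the residual" will be taken to mean two things:
\begin{itemize}
\item $\|R\|_2 \le \epsilon$, uniformly along the EoS phase;
\item the $c_t^2 X^TX$ block carries the top eigenvalue of $K_t$, so that $\lambda_{\max}(\Theta)\approx c_t^2\lambda_{\max}(X^TX)$, up to a correction from the rank-one term that can be bounded by Weyl again.
\end{itemize}
Combining these yields
\[
c_t^2 \le \frac{S+\epsilon}{\lambda_{\max}(X^TX)},
\]
where $S$ is the stabilization level. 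This is $\frac{2}{\eta}-\gamma$ below $\bar\gamma$, and strictly smaller after the phase transition described in Section \ref{sec:model}. A lower threshold therefore lowers the admissible $c_t^2$.

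\textbf{Step 2: $\|v_t\|^2$ does not shrink proportionally.} Here I would invoke Theorem 1 of \citep{ntk_eos}, which in their two-layer/rank-one dynamics shows that $\|v_t\|^2$ is driven by the fit of the output to $y$, not by the sharpness. Because training continues to reduce the loss, $\|v_t\|$ stays bounded below by some $v_{\min}>0$ on the relevant window. This gives $\alpha_t\ge v_{\min}^2\lambda_{\max}(X^TX)/(S+\epsilon)$, which increases as $S$ decreases.

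\textbf{Main obstacle.} Step 2 is where I expect the argument to be hardest. The issue is that weight decay also shrinks $\|\theta\|$ and could shrink $v_t$. I would handle this by noting that the weight-decay shrinkage is order $\eta\gamma$ per step, while the loss-driven growth of $v_t$ is order $\eta\|r_t\|$. Under the stated regularity, this puts the comparison in the regime $\gamma\ll\|r_t\|$, so the net effect on $v_t$ is dominated by the loss-driven growth.

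\textbf{Step 3: $\alpha_t$ controls alignment.} For the second claim, write $K_t/c_t^2 = X^TX+\alpha_t\hat y\hat y^T$ and apply Lemma 3 of \citep{ntk_eos}. As $\alpha_t$ grows, first-order eigenvector perturbation (or the secular equation for a rank-one update) rotates the leading eigenvectors toward $\hat y$. The projection $\langle Y, q_1(\alpha_t)\rangle^2$ is monotone in $\alpha_t$ on the interval where the eigengap of $X^TX$ dominates $\alpha_t$ and $\hat y$ has nonzero overlap with the top eigenspace. That interval is the "appropriate range" in the statement. Chaining Steps 1–3 gives the lemma.
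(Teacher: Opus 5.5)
You follow the paper's two-part skeleton, but Steps 1--2 only deliver a lower bound, $\alpha_t\ge v_{\min}^2\lambda_{\max}(X^TX)/(S+\epsilon)$. A lower bound that rises as $S$ falls does not show that $\alpha_t$ rises. Nothing stops the baseline run from having $\|v_t\|$ far above $v_{\min}$, so your chain never actually compares the two regimes. Two ingredients are missing.

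First, $c_t^2$ has to be pinned from both sides. The paper does this by treating $\lambda_{\max}(\nabla^2L)\approx\frac{\lambda_1}{n}c_t^2$ from \citep{ntk_eos} as an approximate equality, and compares $(c_t^*)^2\approx nS^*/\lambda_1$ with the baseline $2n/(\eta\lambda_1)$. Your second bullet would supply the same thing via Weyl, but you keep only the upper inequality. That inequality in fact needs only your first bullet, since the rank-one term is positive semidefinite.

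Second, $\|v_t\|^2$ under the lowered threshold must be at least comparable to its baseline value, not merely bounded below. The cleanest way to get this in your own framework is to use the fit directly. Since $F_t=c_tv_t$, a residual $\|F_t-Y\|\le\rho\|Y\|$ with $\rho<1$ forces $|c_t|\,\|v_t\|\in[(1-\rho)\|Y\|,(1+\rho)\|Y\|]$. Hence $\alpha_t=\|v_t\|^2/c_t^2$ equals $\|Y\|^2/c_t^4\approx\|Y\|^2\lambda_1^2/(n^2S^2)$ up to factors $(1\pm\rho)^2$, which is two-sided and strictly decreasing in $S$.

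This fit-based argument also removes your ``main obstacle''. Your proposed resolution $\gamma\ll\|r_t\|$ sits awkwardly with the small-residual condition $\|R\|_2\le\epsilon$ you imposed in Step 1, because for the linear network $R$ is linear in the residual. Relatedly, Theorem 1 of \citep{ntk_eos} does not say that $\|v_t\|^2$ is driven by the fit rather than the sharpness. As recalled in the paper's appendix, its Phase III growth comes from an $\eta^2$ term proportional to $\Delta_t$, which is positive only when $\frac{\lambda_1}{n}c_t^2>\frac{2}{\eta}$. Once the sharpness sits below $2/\eta$, a fit-based argument is the more defensible source of control on $\|v_t\|$. You are right, however, that importing Theorem 1 into a run with weight decay needs justification; the paper's proof does not address this.

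Step 3 proves a different statement from the one claimed. The lemma, and the paper's Part 2, concern \emph{which} eigenvector of $A(\alpha)=D+\alpha bb^T$ is maximally aligned with $\hat y$. Here $D=\mathrm{diag}(\lambda_1,\ldots,\lambda_n)$ and $b=Q^T\hat y$, with $Q$ the eigenbasis of $X^TX$. Lemma 3 of \citep{ntk_eos} says the index $i$ maximizing $|\langle\tilde q_i,\hat y\rangle|$ is $j$ whenever $\alpha$ lies in the band $(\lambda_j-\lambda_k,\,\lambda_{j-1}-\lambda_k)$, up to $1\pm\mathcal{O}(\delta_\lambda+n^{-a/2})$ factors. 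Since the bands are ordered, increasing $\alpha$ lowers $j$. The shift therefore happens as $\alpha$ crosses the gaps $\lambda_{j-1}-\lambda_k$, i.e.\ when $\alpha$ is comparable to them. It does not happen in your range where the eigengap dominates $\alpha_t$, where first-order perturbation barely rotates the leading eigenvectors.

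Your monotonicity claim for $\langle\hat y,q_1(\alpha)\rangle^2$ is in fact true for every $\alpha>0$, assuming $b_1\neq0$. Take $q_1\propto(\mu I-D)^{-1}b$ with secular equation $\alpha g(\mu)=1$, where $g=\sum_ib_i^2/(\mu-\lambda_i)$, $h=\sum_ib_i^2/(\mu-\lambda_i)^2$ and $m=\sum_ib_i^2/(\mu-\lambda_i)^3$. The squared cosine is $g^2/h$, its $\mu$-derivative is $2g(gm-h^2)/h^2\ge0$ by Cauchy--Schwarz, and $\mu$ increases with $\alpha$. But this concerns the top eigenvector only and does not locate the maximally aligned one. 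To reach the stated conclusion, replace this step by Lemma 3's band statement together with the ordering of the bands, as the paper does.
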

\textit{The formal statement and proof of this lemma are deferred to Appendix \ref{app:ntk_connection}.} This lemma provides a rigorous heuristic for the stabilizing effect of weight decay. By triggering a phase transition that stabilizes sharpness at a significantly lower threshold, weight decay restricts $c_t^2$. This mechanically inflates the ratio $\alpha_t$, forcing the NTK's leading eigenvectors to align much more strongly with the target vector. Ultimately, weight decay acts as a dynamic limit on the complexity of the learned functional representation, ensuring robust feature alignment. While fully developing the connection between the EoS and NTK alignment dynamics is beyond the scope of this work (we refer readers to \citep{ntk_eos} for a detailed exploration), Figure \ref{fig:ntk_eigenvalue_decrease} corroborates these theoretical insights by empirically demonstrating how the top eigenvalue of the NTK systematically decreases as the weight decay parameter $\gamma$ increases.

\begin{figure}[htbp]
    \centering
    \includegraphics[width=0.8\textwidth]{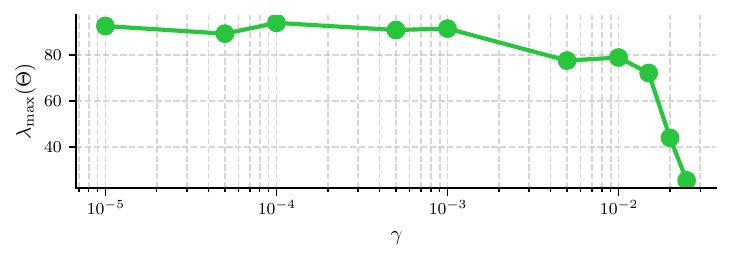} 
    \caption{The top eigenvalue of the empirical NTK for an MLP, $\lambda_{max}(\Theta)$, evaluated after sharpness stabilization, as a function of weight decay $\gamma$, with $\eta=0.02$. As predicted by Equation \ref{eq:ntk_eos_bound}, the lowered parameter-space EoS threshold explicitly restricts the function-space NTK spectrum.}
    \label{fig:ntk_eigenvalue_decrease}
\end{figure}

\section{Implications}
\label{sec:implications}

Our findings support that weight decay acts as an active and dynamic participant in the training trajectory. Modeling the sharpness dynamics at the EoS reveals that local dynamics are heavily influenced by global interactions. Specifically, the global parameter alignment $c_y = \langle \nabla S, \theta^* \rangle$ strongly dictates the bounds of local curvature where phase shifts occur. This framework yields three primary implications for deep learning:
\begin{itemize}[leftmargin=1em,itemsep=0.6em,topsep=0.15em,parsep=0pt]
    \item \textbf{Revisiting Curvature as a Stability Proxy:} We demonstrate that low loss sharpness does not necessarily imply a departure from the EoS. In MLPs, even when the maximum eigenvalue drops significantly below the regularized threshold ($\lambda_{max} \ll \frac{2}{\eta} - \gamma$), the model continues to exhibit EoS signatures. Consequently, researchers must look beyond purely curvature-based thresholds to accurately diagnose training stability.
    
    \item \textbf{Weight Decay improves Function-Space Stability \textit{in a precise sense}:} Recent literature links EoS sharpness reductions to improved Neural Tangent Kernel (NTK) target alignment \citep{ntk_eos}. Our framework provides the mechanism for how weight decay interplays with this phenomenon. With a phase transition that stabilizes sharpness at a heavily reduced threshold, weight decay actively restricts the NTK spectrum, yielding function-space training improvements and stability.
    
    \item \textbf{Weight Decay as a Multi-Stability Control Signal:} Classical optimizer design treats weight decay as a scalar regularization hyperparameter, tuned independently of learning rate scheduling or architecture. Our results complicate this picture because a single choice of $\gamma$ simultaneously governs the EoS sharpness threshold, and in turn the NTK spectral radius, its consequences are entangled across all different notions of stability. Crucially, this effect is architecture-dependent. Optimizer designers should therefore treat weight decay not as a static penalty, but as a geometry-aware control signal whose effect is jointly determined by architecture, learning rate, and the evolving global alignment $c_y = \langle \nabla S, \theta \rangle$.

\end{itemize}
\section{Limitations}
\label{sec:limitations}

We acknowledge and highlight two primary limitations of our work.

\textbf{Scale of experiments.} Our empirical evaluation is limited to small architectures and datasets, as computing the top Hessian eigenvalue and the third-order quantities in our model costs $O(d \cdot n)$ per step via Hessian-vector products. This constraint is shared by the broader EoS literature, including the foundational studies \cite{cohen, andreyev_beneventano} from which we adapted our setup. Thus while the limitations are strong, they are motivated and shared by the rest of the relevant literature.

\textbf{Full-batch gradient descent.} Our analysis is further restricted to full-batch gradient descent, as the self-stabilization framework \cite{damian_selfstab} on which our model builds has not been successfully extended to mini-batch optimizers, a gap widely recognized as a major open problem in this field \cite{andreyev_beneventano, damian_selfstab}. Extending our framework to mini-batch and adaptive settings, developing tractable sharpness proxies for large-scale models, and developing self-stabilization mechanisms for mini-batch optimizers, are natural directions for future work.

\section{Conclusion}

This paper reveals that weight decay affects the training dynamics at the Edge of Stability (EoS), surprisingly qualitatively differently than how expected from naive arguments. By modeling gradient descent with weight decay as an underdamped harmonic oscillator, we develop a highly predictive framework capturing both local friction and global interactions. Our model demonstrates that local weight decay friction dampens sharpness oscillations, as observed in CNNs, while a sufficiently large global alignment between the parameter vector and sharpness gradient triggers a striking phase transition in MLPs which reduce the sharpness threshold. Crucially, we show that this depressed parameter-space threshold explicitly restricts the spectrum of the empirical Neural Tangent Kernel (NTK). \textit{This is the first result, to the knowledge of the authors, supporting the practitioners' folklore that weight decay enhances stability}. Ultimately, weight decay acts as a dynamic limit on representational complexity, bridging parameter-space stabilization with robust function-space feature alignment.

\newpage

\bibliographystyle{unsrt}
\bibliography{references}

\newpage
\appendix

\section{Empirical Results}
\label{app:empirical}

\subsection{EoS behaviour at lower sharpness threshold}\label{EoS_MLP_osc}
Figure \ref{fig:in_fact_eos} shows an MLP trained with stepsize $\eta=0.02$ and weight decay $\gamma=0.02$. The sharpness stabilizes around 80, far below the weight decay adjusted EoS threshold $\frac{2}{\eta}-\gamma=99.98$. We argue that the model in fact operates at the \emph{Edge of stability}, despite the sharpness being significantly lower than the conventional threshold. 

In early training, during progressive sharpening, the loss decreases monotonically, shown by the \emph{per step difference in loss} $\Delta L$ being strictly negative. Around step 7000, however, the sharpness stabilizes at approximately 80, and the dynamics change qualitatively. $\Delta L$ begins alternating in sign, with individual steps frequently increasing the loss, directly contradicting the descent lemma, yet the system does not diverge. The continued downward trend in the loss demonstrates that the time-averaged per-step loss change is negative throughout the oscillations. This confirms that the training is not in a descent lemma scenario, but operates in a barely stable regime. The model thus exhibits behavior consistent with EoS, despite $\lambda_{max}\ll\frac{2}{\eta}-\gamma$.

\begin{figure}[h!]
    \centering
    \includegraphics[width=0.66\linewidth]{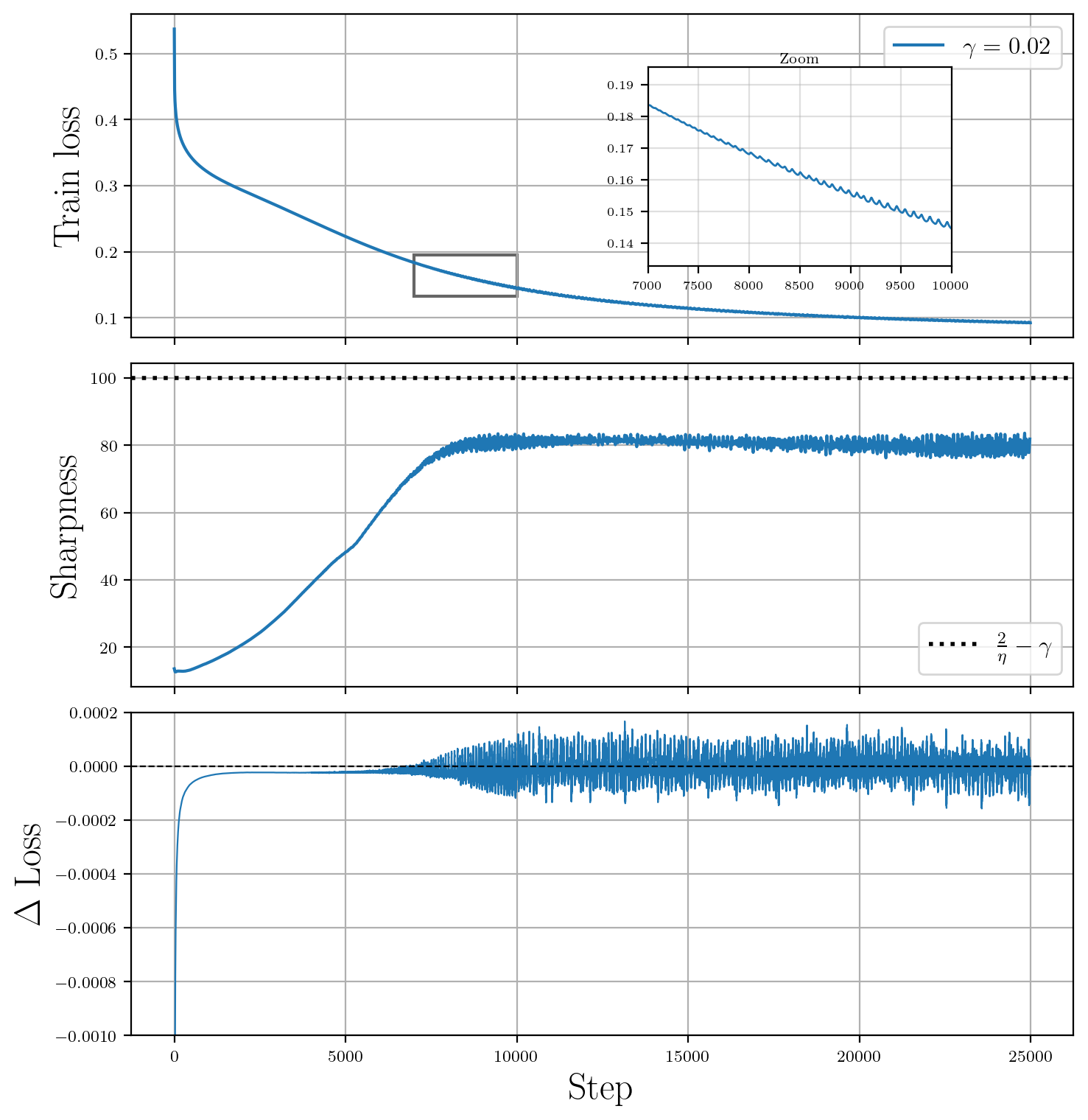}
    \caption{Training dynamics for an MLP with ReLU activations trained with full batch gradient descent and MSE loss on a 5000 sample subset of \texttt{cifar10}, with weight decay $\gamma=0.02$, and learning rate $\eta=0.02$. The sharpness $(\lambda_{max})$ stabilizes at approximately 80, far below the predicted EoS threshold of $\frac{2}{\eta}-\gamma=99.98$. Once the sharpness stabilizes, the per-step loss difference $\Delta L$ oscillates, while the overall loss continues to decrease. The y-axis of the $\Delta L$ panel is clipped to highlight the oscillations. The initial per-step loss decrease is substantially larger in magnitude.}
    \label{fig:in_fact_eos}
\end{figure}

\newpage

\subsection{On the choice of learning rate}
We use $\eta=0.02$ for MLP and $\eta=0.01$ for CNN. The difference is motivated by the different architectures' initialization sharpness. As seen in Figure $\ref{fig:PS_both_arcs}$, the MLP initializes at sharpness $\sim10$, far below the EoS threshold of $2/\eta=100$, whereas the CNN initializes at sharpness $\sim90$, already close to the $\eta=0.02$ threshold of 100. By choosing $\eta$ such that the initialized model sits well below $2/\eta$ for both architectures, we ensure a comparable pre-EoS starting point, that makes the effects of weight decay easier to isolate and compare.

\begin{figure}[htbp]
    \centering
    \includegraphics[width=1\linewidth]{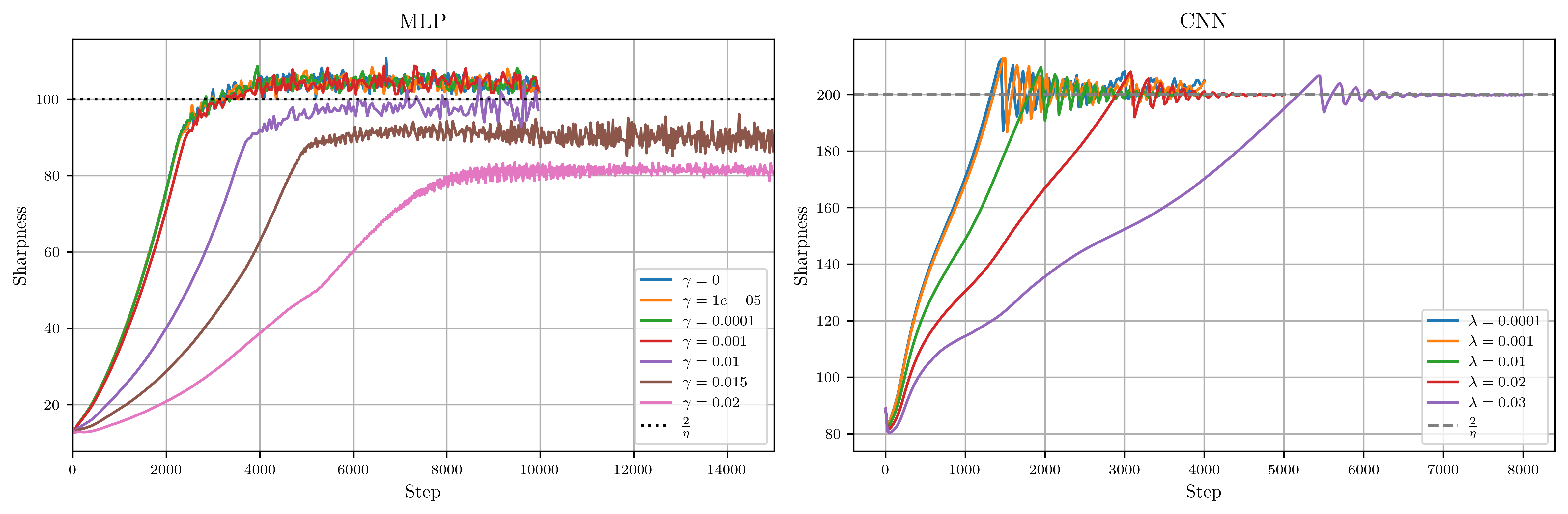}
    \caption{Sharpness trajectories for an MLP (left, $\eta=0.02$) and a CNN (right, $\eta=0.01$), trained with full batch GD on a 5k subset of \texttt{cifar10}. Increasing $\gamma$ slows progressive sharpening, delaying EoS onset. For large $\gamma$, sharpness stabilizes below $2/\eta-\gamma$ in MLP (left). Further analyzed in Appendix \ref{Sharpness threshold}}
    \label{fig:PS_both_arcs}
\end{figure}

\subsection{EoS threshold in MLPs}
\label{Sharpness threshold}
We empirically investigate how weight decay affects the value at which the sharpness $\lambda_{max}(\nabla^2L(\theta))$ stabilizes during training. We train a 2-layer MLP with 200 hidden units and ReLU activations on a 5000-sample subset of \texttt{cifar10}, using full batch gradient descent with learning rate $\eta=0.02$. We sweep weight decay $\gamma\in [10^{-5}, 3\times 10^{-2}]$ across 20 independent runs, with all other hyperparameters fixed. Sharpness is estimated during training via power iteration, and logged every 25 steps. Note that runs with larger $\gamma$ require significantly more steps to reach stabilization due to the slowdown in progressive sharpening. For each run, the stabilizing value is estimated as the mean of $\lambda_{max}$ over the final $10\%$. As shown in Figure \ref{fig:PS_both_arcs}, all the represented runs reach a stable regime well before the end of training. 

For values $\gamma>2.5\times 10^{-2}$, progressive sharpening slows down to the extent that we were unable to reach stabilization within a reasonable training horizon. Preliminary runs suggest that the sharpness continues to stabilize at progressively lower values. 


\begin{figure}[htbp]
    \centering
    \includegraphics[width=0.8\linewidth]{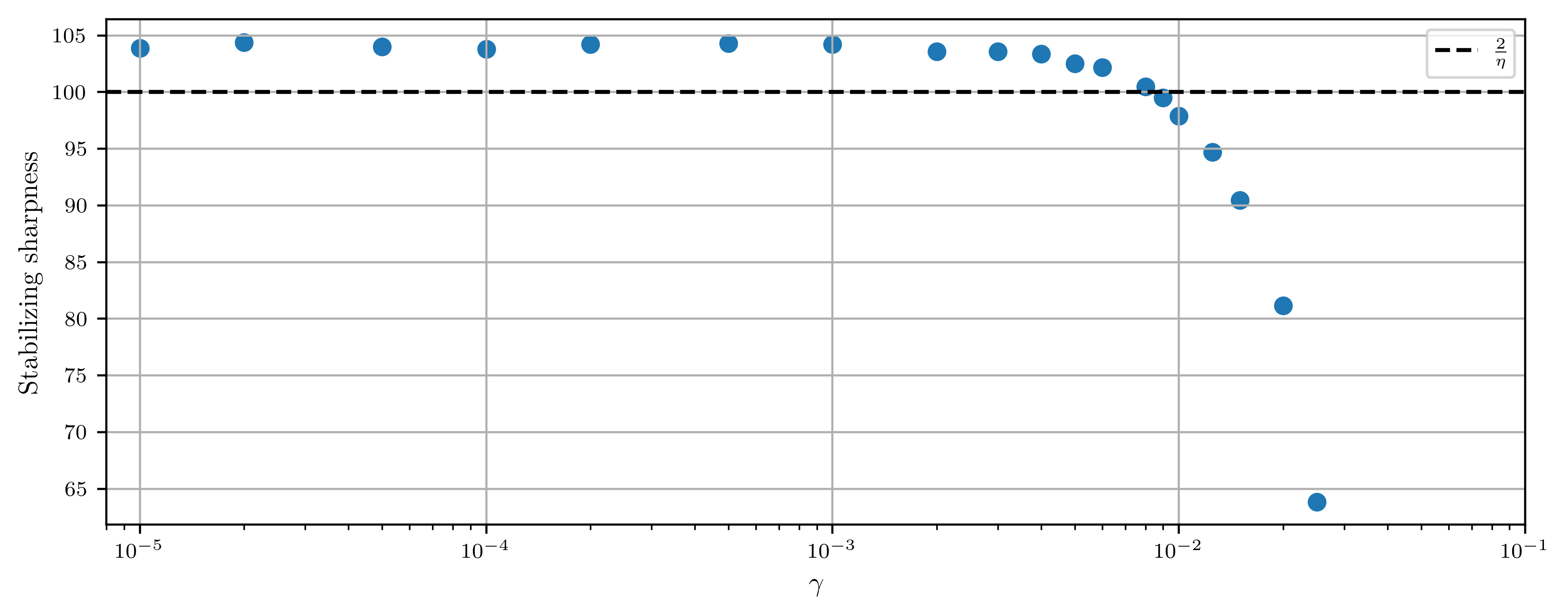}
    \caption{Stabilizing sharpness as a function of $\gamma$ (reproduced from Figure \ref{fig:empirical_combined})}
    \label{fig:phaseshift2}
\end{figure}

\newpage

\subsection{Chaotic oscillations in CNNs}\label{app:multi_eigenvalues}

When training CNNs, after the sharpness reaches the EoS threshold, we observe the oscillations becoming chaotic after an initial phase of dampening, and that this effect is delayed when training with higher weight decay. We show in figure \ref{fig:blank2} that this behavior is due to subsequent Hessian eigenvalues reaching the $\frac{2}{\eta}$ boundary. The slowdown in progressive sharpening also applies to additional eigenvalues, which significantly delays this effect when training with higher $\gamma$. 

\begin{figure}[h!]
    \centering
    \includegraphics[width=0.8\linewidth]{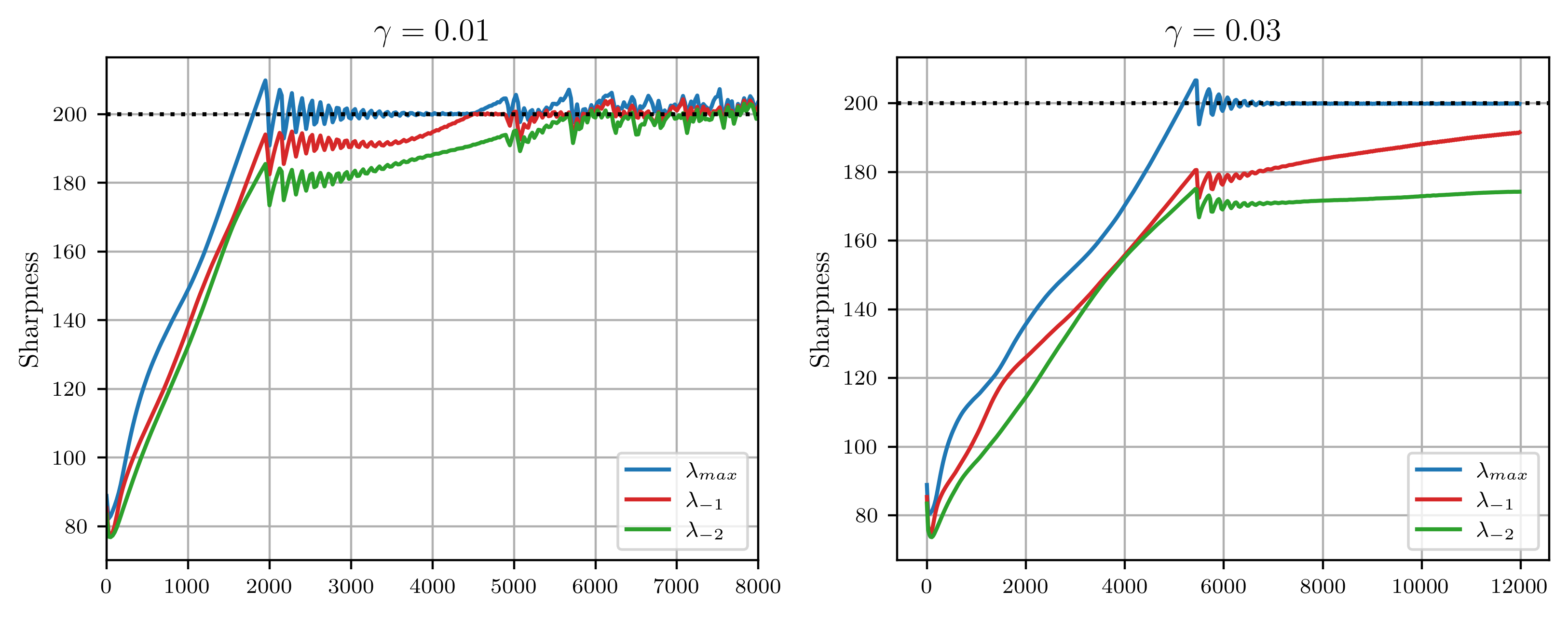}
    \caption{Evolution of eigenvalues of the loss Hessian. CNN with ReLU trained on a 5k subset of \texttt{cifar10}. When additional eigenvalues reache $2/\eta$ the sharpness $\lambda_{\operatorname{max}}$ starts behaving chaotically. This behaviour is delayed for increased $\gamma$-values, as the slowdown in progressive sharpening also applies to additional eigenvalues.}
    \label{fig:blank2}
\end{figure}

\subsection{Empirical investigation of global interaction terms}
We empirically investigate the mechanism behind the reduced EoS threshold observed in MLPs with ReLU activation. The theoretical analysis from Appendix \ref{app:model} shows that the EoS limit cycle collapses when $c_y=\nabla S\cdot \theta^*$ exceeds a critical threshold $c_y^{\operatorname{crit}}=\frac{\alpha}{\gamma}+\gamma$. In the theoretical analysis both $c_y$ and $\alpha$ are given from the fixed reference point $\theta^*$. In our empirical analysis we compute both quantities dynamically along the training trajectory, yielding the time varying $c_y(t)=\nabla S(t)\cdot \theta(t)$ and $\alpha(t)=-\nabla L(t)\cdot\nabla S(t)$. We find that the crossing condition $c_y(t)>c_y^{\operatorname{crit}}(t)$ empirically coincides with lowered EoS threshold of MLPs, and that CNNs do not cross this threshold during progressive sharpening.\footnote{We note that during early stages of training, before the onset of progressive sharpening, $\alpha(t)$ is transiently negative, placing the system in the regime $c_y>c_{y}^{\operatorname{crit}}$. This does not appear to affect the subsequent training dynamics. This is consistent with the theoretical analysis assuming $\alpha>0$.}.  

Figure \ref{fig:MLP_cycrit} illustrates the condition $c_y>c_{y}^{\operatorname{crit}}$ in an MLP for two values of $\gamma$. For small $\gamma$ (left) $c_y^{crit}$ is orders of magnitude larger than $c_y$ during progressive sharpening. It collapses when sharpness reaches $\frac{2}{\eta}$, as $\alpha$ is forced negative by the EoS-dynamics. Thereby the crossing condition coincides with standard EoS. For large $\gamma$ (right), $c_y^{crit}$ is much smaller due to the $\frac{1}{\gamma}$ scaling, allowing $c_y(t)$ to overtake it before sharpness reaches $\frac{2}{\eta}$. The sharpness then stabilizes far below the standard EoS-threshold. 

\begin{figure}[htbp]
    \centering
    \includegraphics[width=0.8\linewidth]{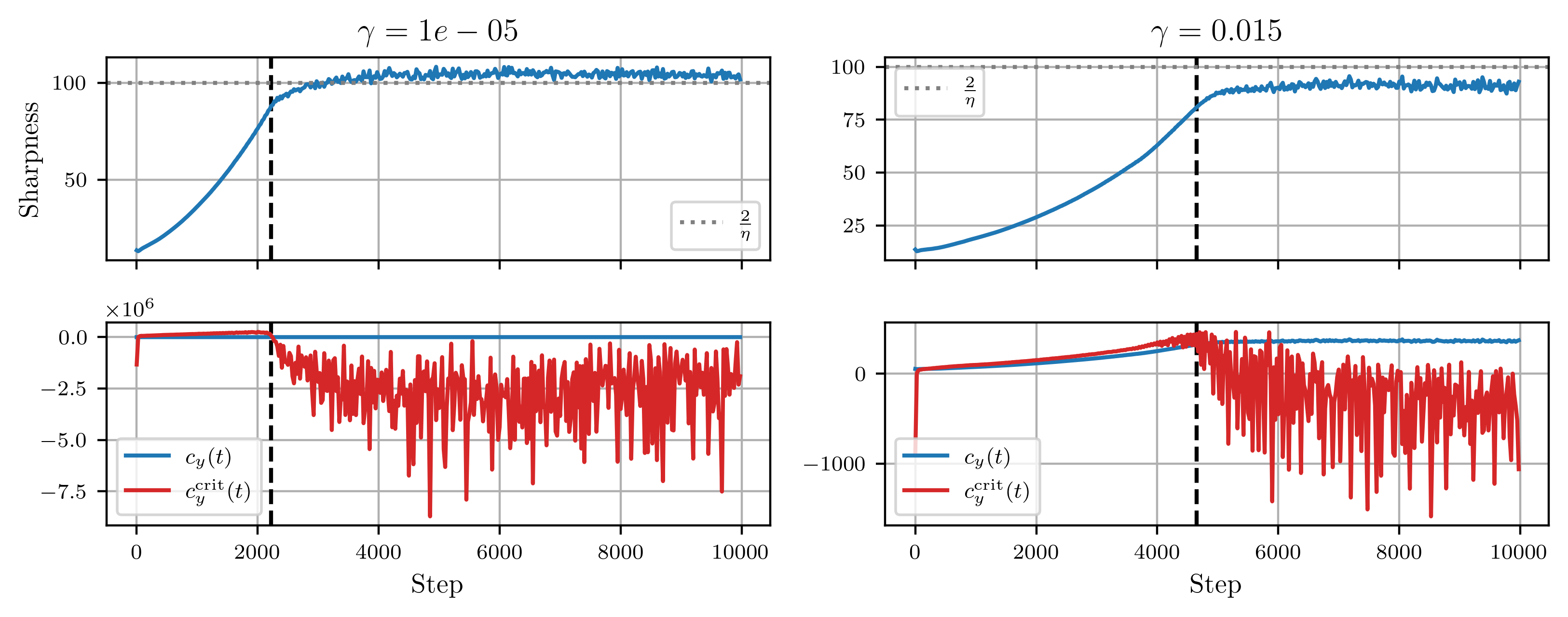}
    \caption{Sharpness (top) and $c_y(t)$, $c_y^{\operatorname{crit}}$ (bottom) during training of an MLP with ReLU on a 5k subset of \texttt{cifar10}, with $\eta=0.02$ and full batch gradient descent. For small $\gamma$ (left), $c_y^{\operatorname{crit}}$ is larger than $c_y$ until the sharpness reaches $2/\eta-\gamma$. For large $\gamma$ (right), the $\frac{1}{\gamma}$ scaling keeps $c_y^{\operatorname{crit}}$ small, allowing $c_y(t)$ to cross it before the sharpness reaches $2/\eta-\gamma$}
    \label{fig:MLP_cycrit}
\end{figure}

Figure \ref{fig:CNN_cycrit} shows the same quantities for a CNN. In contrast to the MLP, $c_y^{\operatorname{crit}}(t)$ stays above $c_y(t)$ during progressive sharpening for all tested values of $\gamma$, and the sharpness consistently reaches $\frac{2}{\eta}$. This suggests that the global offset term $c_y$ and the progressive sharpening coefficient $\alpha$ depend on the architecture in a way that prevents the crossing condition to be met during progressive sharpening in a CNN. We leave understanding this architectural dependence for future work.  

\begin{figure}[htbp]
    \centering
    \includegraphics[width=0.8\linewidth]{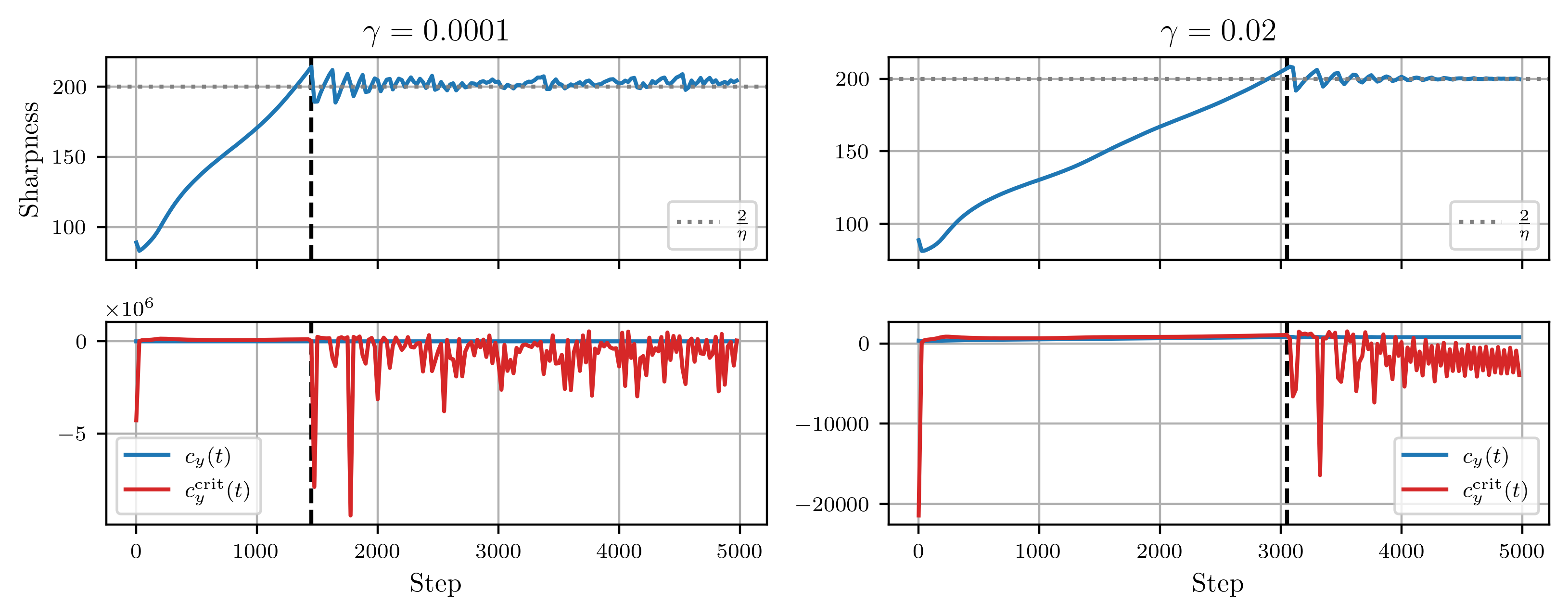}
    \caption{Sharpness (top) and $c_y(t)$, $c_y^{\operatorname{crit}}$ (bottom) during training of an CNN with ReLU on a 5k subset of \texttt{cifar10}, with $\eta=0.02$ and full batch gradient descent. For both values of $\gamma$, $c_y$ stays below $c_y^{\operatorname{crit}}$ until the sharpness reaches $2/\eta$}
    \label{fig:CNN_cycrit}
\end{figure}
\clearpage
\subsection{NTK}

The empirical NTK is defined as $\Theta=JJ^T\in\mathbb{R}^{N\times N}$, where $J\in \mathbb{R}^{N\times P}$ is the Jacobian of the network outputs with respect to all $P$ parameters, evaluated over a subset of N test points. For each sample $x_i$, we compute $\nabla_{\theta}f(x_i)\in\mathbb{R}^P$ via a single backwards pass, giving the normalized Gram matrix
\begin{equation*}
    \hat{\Theta}=\frac{1}{N}JJ^T
\end{equation*}
where the $\frac{1}{N}$ normalization keeps the Gram matrix in the same convention as the mean loss Hessian. The Jacobian was evaluated from the model weights after the sharpness had reached a stable oscillatory regime, and remained there for 2000 steps, ensuring the weights reflected the EoS regime. We extracted $\lambda_{max}(\hat{\Theta})$ via exact eigendecomposition, holding the random seed and N fixed across all $\gamma$ values. 

\paragraph{Choice of N} Since $\hat{\Theta}$ is estimated from a finite sample, its spectrum depends on N. We verify convergence by plotting $\lambda_{max}(\hat{\Theta})/N$ against N for the smallest $\gamma$. We find that the normalized eigenvalue is approximately stable by $N=5000$, and adopt this value for all experiments. 

\begin{figure}[htbp]
    \centering
    \includegraphics[width=0.6\linewidth]{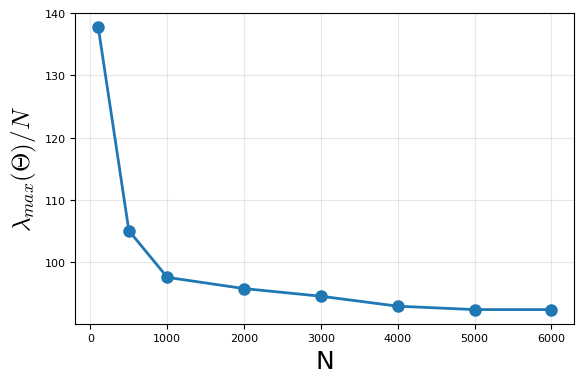}
    \caption{Convergence of the normalized NTK top eigenvalue $\lambda_{max}(\hat{\Theta})/N$ as a function of sample size N, evaluated on the smallest $\gamma$ model. The eigenvalue approximately stabilizes by N=5000}
    \label{fig:placeholder}
\end{figure}
\clearpage
\subsection{Variance across random seeds}\label{seeds}
To verify that our results are not sensitive to the choice of random seed, we repeat our MLP sharpness experiment across 6 random seeds and report the mean sharpness with $\pm2$ standard deviation in Figure \ref{fig:ex_variance}. The sharpness trajectory is consistent across seeds, suggesting that the observed phenomenon of sharpness stabilizing far below $2/\eta-\gamma$ is not an artifact of a particular initialization.
\begin{figure}[htbp]
    \centering
    \includegraphics[width=0.8\linewidth]{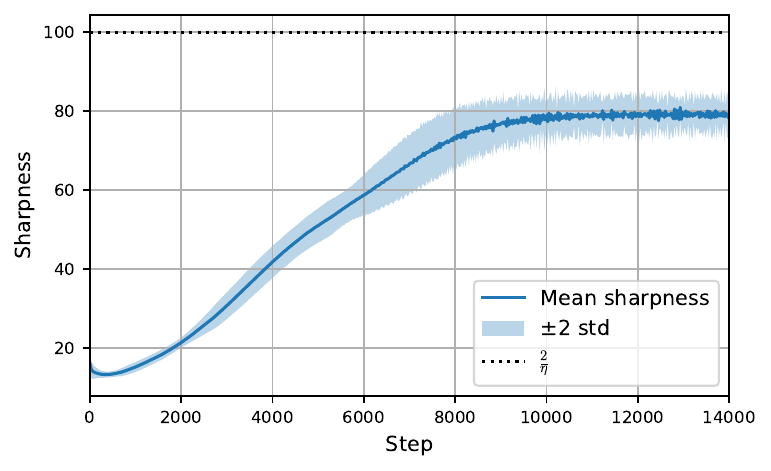}
    \caption{MLP with MSE loss trained with full batch gradient descent, $\eta=0.02$ and $\gamma=0.02$ on a 5k subset of \texttt{cifar10}, across 6 different seeds}
    \label{fig:ex_variance}
\end{figure}



\newpage

\section{Derivation of the weight decay EoS model}
\label{app:model}

Here we derive exactly the model for the dynamics at the EoS under weight decay by expanding on the framework introduced by \citep{damian_selfstab}. They show that the third-order term $\nabla^3L$ implicitly stabilizes the oscillations in the direction of maximal sharpness through the identity $\nabla^3L(u,u)=\nabla S$. By incorporating weight decay in the derivation, we show that the system can be approximated by an underdamped harmonic oscillator. Importantly, in contrast to the local dynamics in standard EoS analysis, weight decay introduces global interaction factors. The modified gradient descent update rule is given by:
\begin{equation}
\theta_{t+1} = \theta_t - \eta \nabla L(\theta_t) - \eta \gamma \theta_t = (1 - \eta\gamma)\theta_t - \eta \nabla L(\theta_t)
\end{equation}

\subsection{Setup and Structural Assumptions}
\label{app:model_assumptions}

Let $\theta^*$ be a reference point where the sharpness $S(\theta^*) = \lambda_{\max}(\nabla^2 L(\theta^*))$ is precisely at the stability threshold $\frac{2}{\eta}$.  We track the displacement $\delta_t = \theta_t - \theta^*$ around $\theta^*$. The dynamics are projected onto the top unit eigenvector $u(\theta^*)$ ($\nabla^2 L(\theta^*)u=S(\theta^*)u(\theta^*)$) and the gradient of the sharpness $\nabla S(\theta^*)$, at the fixed reference point $\theta^*$, we drop $\theta^*$ where applicable to simplify notation. We define our local coordinates as:
\begin{equation}
x_t = \langle u, \delta_t \rangle \quad \text{and} \quad y_t = \langle \nabla S, \delta_t \rangle
\end{equation}

To ensure the dynamics cleanly decouple into this $\operatorname{span}\{u, \nabla S\}$ basis, we make the following assumptions at $\theta^*$, adapted from the self-stabilization framework:

\begin{assumption}
$L \in \mathcal{C}^3$ near $\theta^*$, and the top eigenvalue $S(\theta^*)$ is unique and strictly isolated from the rest of the spectrum. In particular, $\exists\varepsilon>0$ such that the second largest eigenvalue satisfies $\lambda_2(\nabla^2 L(\theta^*)) \le \frac{2}{\eta}-\varepsilon$.
\end{assumption}

\begin{assumption}
The gradient of the sharpness has no component in the unstable direction, such that $\langle \nabla S(\theta^*), u \rangle = 0$. Furthermore, the gradient and the gradient of sharpness lie in the kernel of the Hessian: $\nabla L(\theta^*), \nabla S(\theta^*) \in \ker(\nabla^2 L(\theta^*))$.
\end{assumption}

\begin{assumption}
There is a consistent force driving sharpness upward, characterized by $\alpha = -\langle \nabla L(\theta^*), \nabla S(\theta^*) \rangle > 0$. Outside the very early phases of training, this is observed across virtually all experiments, in our work and the surrounding literature.
\end{assumption}

\subsection{Derivation of the dynamical system}
\label{app:model_update_rules}

We consider $x_{t+1}$ and $y_{t+1}$ in terms of $x_t$ and $y_t$. For $y_t$, taking the inner product of the displacement update with $\nabla S$, we obtain:
\begin{align*}
    y_{t+1}&=\langle\nabla S,(\theta_{t+1}-\theta^*)\rangle\\
    &=\langle \nabla S,\left((1-\eta\gamma)\theta_{t}-\theta^*-\eta\nabla L(\theta_t)\right)\rangle\\
    &=(1-\eta\gamma)\langle\nabla S,(\theta_t-\theta^*)\rangle+\eta\gamma\langle\nabla S,\theta^*\rangle-\eta\langle\nabla S,\nabla L(\theta_t)\rangle\\
    &=(1-\eta\gamma)y_t+\eta\gamma c_y-\eta\langle\nabla S,\nabla L (\theta_t)\rangle
\end{align*}
where $c_y=\langle\nabla S,\theta^*\rangle$ is the relevant global interaction term for the phase-shift behavior.
The gradient term can be approximated by a Taylor expansion around $\theta^*$:
\begin{align*}
    \langle \nabla S, \nabla L(\theta_t) \rangle &\approx \overbrace{\langle \nabla S, \nabla L(\theta^*) \rangle}^{-\alpha} + \overbrace{\langle \nabla S, \nabla^2 L(\theta^*) \delta_t \rangle}^{\nabla S\in\ker\left(\nabla^2 L(\theta^*)\right)} + \frac{1}{2}\overbrace{\nabla^3 L(\theta^*)(\nabla S, \delta_t, \delta_t)}^{\nabla^3L(u,u,\cdot)=\nabla S}\\
    &\approx-\alpha+|\nabla S\|^2\frac{x^2_t}{2}
\end{align*}
where we approximate the displacement by its dominant unstable component, $\delta_t \approx x_t u$. By the self-stabilizing identity (Lemma 2 in \citep{damian_selfstab}), $\nabla^3 L(u, u, \cdot) = \nabla S$. Therefore, last term evaluates to $\frac{1}{2} x_t^2 \langle \nabla S, \nabla S \rangle = \frac{1}{2} \|\nabla S\|^2 x_t^2$.

Substituting these terms back into the update, and defining $\beta = \|\nabla S\|^2$, yields the final discrete update for $y_{t+1}$:
\begin{equation} \label{eq:y_update_wd}
    y_{t+1} = (1 - \eta\gamma)y_t - \eta\gamma c_y + \eta\alpha - \eta\beta\frac{x_t^2}{2}
\end{equation}

Similarly for $x_t$, we project the displacement update onto the unstable eigenvector $u$:
\begin{align}
    x_{t+1} = (1 - \eta\gamma)x_t - \eta\gamma c_x - \eta \langle u, \nabla L(\theta_t) \rangle
\end{align}
where $c_x=\langle u,\theta^*\rangle$. We expand the gradient projection $\langle u, \nabla L(\theta_t) \rangle$:
\begin{align*}
    \langle u, \nabla L(\theta_t) \rangle \approx \langle u, \nabla L(\theta^*) \rangle + \langle u, \nabla^2 L(\theta^*) \delta_t \rangle + \frac{1}{2} \nabla^3 L(\theta^*)(u, \delta_t, \delta_t)
\end{align*}
Note that $\nabla L \in \ker(\nabla^2 L)$ and $u$ has a strictly positive corresponding eigenvalue, which implies that $\nabla L \perp u$ by the Spectral theorem. Additionally, since $u$ is the top eigenvector, $\langle u, \nabla^2 L(\theta^*) \delta_t \rangle = S(\theta^*) \langle u, \delta_t \rangle = S(\theta^*) x_t$.

To approximate the third term, we decompose $\delta_t = x_t u + \delta_\perp$. Then $\nabla^3 L(u, u, \cdot) = \nabla S$ gives:
\begin{align*}
    \nabla^3 L(u, \delta_t, \delta_t) &= x_t^2 \nabla^3 L(u, u, u) + 2x_t \nabla^3 L(u, u, \delta_\perp) + \nabla^3 L(u, \delta_\perp, \delta_\perp) \\
    &\approx x_t^2 \langle \nabla S, u \rangle + 2x_t \langle \nabla S, \delta_\perp \rangle
\end{align*}
Since $\nabla S \perp u$ we have $\nabla^3L(u,\delta_t,\delta_t)\approx2 x_t\langle\nabla S,\delta_{\perp}\rangle = 2 x_t y_t$. The update rule for $x$ is then
\begin{equation}
    x_{t+1}=(1-\eta\gamma)x_t-\eta\gamma c_x-\eta\overbrace{\left(\frac{2}{\eta}+y_t\right)}^{\approx S(\theta_t)}x_t=-x_t(1+\eta\gamma+\eta y_t)-\eta\gamma c_x
\end{equation}

\subsection{Analysis of the Dynamical System}
\label{app:model_analysis}

We analyze the discrete system derived in Section \ref{app:model_update_rules}:
\begin{align*}
    x_{t+1} &= -x_t(1 + \eta y_t + \eta\gamma) - \eta\gamma c_x \\
    y_{t+1} &= y_t(1 - \eta\gamma) + \eta\!\left(\alpha - \beta\frac{x_t^2}{2}\right) - \eta\gamma c_y
\end{align*}
with global interaction terms $c_x = \langle u, \theta^*\rangle$ and $c_y = \langle \nabla S, \theta^*\rangle$. We look for a steady period-2 oscillation of the form $x_t = (-1)^t \Delta + M$, where $\Delta > 0$ is the amplitude and $M$ is a constant offset, and assume $y_t$ has reached its resting value $y^*$. Substituting into the $x_t$ update rule:
\[
    (-1)^{t+1}\Delta + M
    = -\bigl[(-1)^t \Delta + M\bigr](1 + \eta y^* + \eta\gamma) - \eta\gamma c_x.
\]
Since the equation must hold for all $t \in \mathbb{N}$, the coefficients of each must vanish independently. Matching oscillating terms proportional to $(-1)^t \Delta$ yields:
\[
    -\Delta = -\Delta(1 + \eta y^* + \eta\gamma)
    \implies
    y^* = -\gamma.
\]
Matching constant terms gives:
\[
    M = -M(1 + \eta y^* + \eta\gamma) - \eta\gamma c_x
    = -M \cdot 1 - \eta\gamma c_x
    \implies
    M = -\frac{\eta\gamma c_x}{2}.
\]
Thus $c_x$ introduces a microscopic asymmetry in the bounce center, proportional to $\eta$, but has \emph{no effect} on the resting sharpness $y^*$. In particular, regardless of $c_x$, the stabilization threshold is
\[
    S(\theta_t) \to \frac{2}{\eta} - \gamma.
\]
We determine the resting amplitude $\Delta$ by averaging the $x_t$ update over a full two-step cycle. Since $x_t$ alternates between $M + \Delta$ and $M - \Delta$, we have $\langle x_t^2 \rangle = \Delta^2 + M^2 \approx \Delta^2$ (as $M = O(\eta)$). Setting $y_{t+1} = y_t = y^* = -\gamma$:
\[
    -\gamma = -\gamma(1-\eta\gamma) + \eta\alpha - \frac{\eta\beta}{2}\Delta^2 - \eta\gamma c_y.
\]
Dividing by $\eta$ and solving:
\[
    0 = \gamma^2 + \alpha - \frac{\beta}{2}\Delta^2 - \gamma c_y
    \implies
    \Delta = \sqrt{\frac{2(\alpha + \gamma^2 - \gamma c_y)}{\beta}}.
\]
Since a physical amplitude requires $\Delta^2 \geq 0$, the system sustains oscillations only when
\[
    \gamma c_y \leq \alpha + \gamma^2
    \iff
    c_y \leq \frac{\alpha}{\gamma} + \gamma =: c_y^{\mathrm{crit}}.
\]
The iterate $x_t$ oscillates with period 2, so we track its envelope $X(\tau) = |x_{\lfloor \tau/\eta \rfloor}|$ and sharpness deviation $Y(\tau) = y_{\lfloor \tau/\eta \rfloor}$, both as functions of continuous time $\tau = \eta t$. From the steady-state analysis, $c_x$ shifts the oscillation center by $M = -\frac{\eta\gamma c_x}{2} = O(\eta)$ but does not affect $\Delta$ or $y^*$; its contribution to the envelope dynamics is therefore sub-leading in $\eta$ and we set $c_x = 0$ here. Similarly, $c_y$ shifts the resting amplitude and is treated separately in the phase-transition analysis. Below we set $c_y = 0$ for the linearized system approximation analysis.

Under these simplifications, substituting $x_t = (-1)^t X$ and $y_t = Y$ into the update rules and passing to leading order in $\eta$ yields the envelope ODE system:
\begin{align*}
    \frac{dX}{d\tau} &= X(Y + \gamma), \qquad X \geq 0,\\
    \frac{dY}{d\tau} &= \alpha - \frac{\beta}{2}X^2 - \gamma Y.
\end{align*}
The unique positive fixed point is
\[
    X^* = \sqrt{\frac{2(\alpha+\gamma^2)}{\beta}}, \qquad Y^* = -\gamma,
\]
which corresponds to the period-2 limit cycle of the discrete system with amplitude $\Delta = X^*$ (when $c_y = 0$) and resting sharpness $y^* = -\gamma$, i.e.\ $S(\theta_t) \to \frac{2}{\eta} - \gamma$.

Linearizing around the fixed point with $U = X - X^*$, $V = Y - Y^*$, the Jacobian is:
\[
    J = \begin{pmatrix} Y^* + \gamma & X^* \\ -\beta X^* & -\gamma \end{pmatrix}
      = \begin{pmatrix} 0 & X^* \\ -\beta X^* & -\gamma \end{pmatrix}.
\]
The characteristic polynomial $\det(rI - J) = 0$ gives $r^2 + \gamma r + \beta(X^*)^2 = 0$. Substituting $\beta(X^*)^2 = 2(\alpha + \gamma^2)$:
\[
    r^2 + \gamma r + 2(\alpha + \gamma^2) = 0
    \implies
    r = -\frac{\gamma}{2} \pm i\frac{\sqrt{8\alpha + 7\gamma^2}}{2}.
\]
Since $\alpha > 0$ and $\gamma > 0$, both the real part $-\frac{\gamma}{2} < 0$ and the non-zero imaginary part are guaranteed. The sharpness deviation $V(\tau)$ therefore evolves as an \emph{underdamped harmonic oscillator}:
\[
    V(\tau) = Ae^{-\frac{\gamma}{2}\tau}\cos(\omega_d \tau + \varphi),
    \qquad \omega_d = \frac{\sqrt{8\alpha + 7\gamma^2}}{2}
\]
confirming that sharpness oscillations decay exponentially at rate of $\frac{\gamma}{2}$, and that larger weight decay yields faster settling. We now return to the full system with $c_y \neq 0$.

\begin{theorem}[Limit Cycle Collapse]
\label{app:model_theorem}
Let $c_y^{\mathrm{crit}} = \frac{\alpha}{\gamma} + \gamma$. If $c_y < c_y^{\mathrm{crit}}$, the system sustains a period-2 limit cycle with amplitude $\Delta > 0$ and the sharpness stabilizes at $S(\theta_t) \to \frac{2}{\eta} - \gamma$. If $c_y \geq c_y^{\mathrm{crit}}$, the limit cycle collapses ($\Delta = 0$), and the $-\frac{\beta}{2}x_t^2$ stabilization term vanishes. The sharpness then detaches from the EoS boundary and settles at the lower fixed point
\[
    y^* = \frac{\alpha}{\gamma} - c_y < -\gamma,
    \quad\text{i.e.,}\quad
    S(\theta_t) \ll \frac{2}{\eta} - \gamma.
\]
\end{theorem}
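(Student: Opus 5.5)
The proof works directly with the reduced discrete system of Appendix~\ref{app:model_analysis},
\[
x_{t+1} = -x_t(1+\eta y_t+\eta\gamma)-\eta\gamma c_x,\qquad
y_{t+1} = y_t(1-\eta\gamma)+\eta\Big(\alpha-\tfrac{\beta}{2}x_t^2\Big)-\eta\gamma c_y .
\]
We split on the sign of $\alpha+\gamma^2-\gamma c_y$, equivalently on $c_y$ versus $c_y^{\mathrm{crit}}$. As in the preceding analysis, $c_x$ only produces the $O(\eta)$ offset $M=-\eta\gamma c_x/2$, so we set $c_x=0$ at leading order. Throughout, "stabilizes" means convergence in the leading-order envelope ODE of Appendix~\ref{app:model_analysis}, now with the $-\gamma c_y$ term restored:
\[
\dot X = X(Y+\gamma),\qquad \dot Y=\alpha-\gamma c_y-\tfrac{\beta}{2}X^2-\gamma Y,\qquad X\ge 0 .
\]

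\emph{Case $c_y<c_y^{\mathrm{crit}}$.} We reuse the period-2 ansatz $x_t=(-1)^t\Delta+M$, $y_t=y^*$. Matching the oscillating coefficients forces $y^*=-\gamma$, which is exactly the computation already carried out in the text. Averaging the $y$-update over a cycle then gives $\Delta^2=2(\alpha+\gamma^2-\gamma c_y)/\beta$, and this is strictly positive precisely because $c_y<c_y^{\mathrm{crit}}$.

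For stability, we linearize the envelope ODE at $(X^*,Y^*)=(\Delta,-\gamma)$. The Jacobian is $\begin{pmatrix}0&\Delta\\-\beta\Delta&-\gamma\end{pmatrix}$, with characteristic polynomial
\[
r^2+\gamma r+\beta\Delta^2=0 .
\]
Since $\gamma>0$ and $\beta\Delta^2>0$, both roots have negative real part. The fixed point is therefore locally asymptotically stable, and $S(\theta_t)\approx \frac{2}{\eta}+y_t\to\frac{2}{\eta}-\gamma$.

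\emph{Case $c_y\ge c_y^{\mathrm{crit}}$.} A nontrivial cycle would again require $y^*=-\gamma$, hence $\Delta^2=2(\alpha+\gamma^2-\gamma c_y)/\beta\le 0$. So no cycle with $\Delta>0$ exists, and the only admissible invariant set has $X=0$, on which the term $-\frac{\beta}{2}x_t^2$ vanishes.

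On $X=0$, the $y$-update is affine: $y_{t+1}=(1-\eta\gamma)y_t+\eta(\alpha-\gamma c_y)$. Its unique fixed point is
\[
y^*=\frac{\alpha}{\gamma}-c_y ,
\]
which it approaches geometrically at rate $1-\eta\gamma$. Rearranging, $y^*\le -\gamma$ is equivalent to $c_y\ge \alpha/\gamma+\gamma$, with strict inequality when $c_y>c_y^{\mathrm{crit}}$.

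It remains to show that $x=0$ is attracting, so the trajectory actually lands on this branch. The $x$-multiplier at $y^*$ is $-(1+\eta(y^*+\gamma))$. Its modulus is $<1$ iff $-2/\eta<y^*+\gamma<0$, which holds for $c_y>c_y^{\mathrm{crit}}$ and $\eta$ small enough that $\eta(c_y-c_y^{\mathrm{crit}})<2$. Equivalently, in the ODE the Jacobian at $(0,y^*)$ is $\mathrm{diag}(y^*+\gamma,-\gamma)$, which has two negative eigenvalues. Since $y^*=\alpha/\gamma-c_y$ lies strictly below the threshold $y=-\gamma$ by the gap $c_y-c_y^{\mathrm{crit}}$, the sharpness settles at $\frac{2}{\eta}+y^*<\frac{2}{\eta}-\gamma$.

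\emph{Main obstacle.} The hardest step is the boundary case $c_y=c_y^{\mathrm{crit}}$ together with any global (rather than local) convergence claim. At equality the two fixed points merge into $(0,-\gamma)$ and the linearization has a zero eigenvalue, so the strict "$<-\gamma$" and "$\ll$" in the statement fail there. I would handle this with a center-manifold or Lyapunov argument: dividing out $X$, the function $V=\frac{\beta}{2}X^2+(Y+\gamma)^2$ satisfies $\dot V=-2\gamma(Y+\gamma)^2\le 0$, and LaSalle's principle then gives convergence to $(0,-\gamma)$. I would present the strict conclusion only for $c_y>c_y^{\mathrm{crit}}$, and read "$\ll$" as holding when $c_y-c_y^{\mathrm{crit}}$ is large relative to $\gamma$.
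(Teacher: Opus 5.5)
Your proposal is correct, and its core is the paper's own proof: the period-2 ansatz forces $y^*=-\gamma$ and $\Delta^2=2(\alpha+\gamma^2-\gamma c_y)/\beta$, and setting $\Delta=0$ in the $y$-update gives the collapsed fixed point $y^*=\frac{\alpha}{\gamma}-c_y$. You go beyond the paper, whose proof only locates the fixed points, in two ways: you show each branch is actually attracting (via the envelope Jacobian with $c_y$ restored, and the diagonal Jacobian at $(0,y^*)$), and you correctly note that $c_y=c_y^{\mathrm{crit}}$ gives $y^*=-\gamma$ exactly, so the strict inequality needs $c_y>c_y^{\mathrm{crit}}$ --- a point the paper glosses over when it says $\Delta^2\ge 0$ fails for all $c_y\ge c_y^{\mathrm{crit}}$.
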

\begin{proof}
From the amplitude formula $\Delta^2 = 2(\alpha + \gamma^2 - \gamma c_y)/\beta$, the condition $\Delta^2 \geq 0$ fails precisely when $c_y \geq c_y^{\mathrm{crit}}$. At collapse, the $-\frac{\beta}{2}\Delta^2$ term in the averaged update for $y_t$ disappears, and setting $y_{t+1} = y_t = y^*$ in the $y_t$ update rule with $\Delta = 0$ yields:
\[
    y^* = y^*(1-\eta\gamma) + \eta\alpha - \eta\gamma c_y
    \implies
    \eta\gamma y^* = \eta(\alpha - \gamma c_y)
    \implies
    y^* = \frac{\alpha}{\gamma} - c_y.
\]
Since $c_y > c_y^{\mathrm{crit}} = \frac{\alpha}{\gamma} + \gamma$ implies $\frac{\alpha}{\gamma} - c_y < -\gamma$, the resting sharpness falls strictly below the standard shifted threshold $-\gamma$, completing the proof.
\end{proof}

In practice, both $c_y$ and $\alpha$ evolve along the training trajectory. The phase transition occurs during training if $c_y(t)$ crosses $c_y^{\mathrm{crit}}(t) = \frac{\alpha(t)}{\gamma} + \gamma$ before the sharpness reaches $\frac{2}{\eta}$, as empirically observed in MLPs (Figure \ref{fig:empirical_combined}) but not in CNNs.

\newpage

\section{NTK Discussion and Proof of Lemma 5.1}
\label{app:ntk_connection}

In this section we consider the work of \citep{ntk_eos} and prove Lemma \ref{lemma:ntk_connection}. We restate the lemma for convenience before providing its formal proof.

\begin{lemma}[Lemma 5.1, restated]
Let $\alpha_t := \frac{\|v_t\|^2} {c_t^2}$ be the ratio between the output norm component and the sharpness component of the NTK. Under mild regularity conditions on the residual, a reduction in the sharpness threshold yields an overall increase in $\alpha_t$. Furthermore, for an appropriate range, increasing $\alpha_t$ shifts the maximum alignment of the target vector $Y$ towards the leading eigenvectors of the NTK.
\end{lemma}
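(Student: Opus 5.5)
The plan is to split the lemma into its two claims and treat each with the rank-1 NTK model $K_t \approx c_t^2 X^\top X + \|v_t\|^2 \hat y \hat y^\top$ from the main text, importing the two ingredients of \citep{ntk_eos}: Theorem 1 links sharpness to $c_t^2$, and Lemma 3 gives the eigen-alignment computation for a rank-one perturbation of $X^\top X$.

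\textbf{Part 1: lowering the threshold increases $\alpha_t$.} I would use the Gauss--Newton decomposition $H = J^\top J + R$ from Section~\ref{sec:ntk} together with Weyl's bound \eqref{eq:ntk_eos_bound}. The assumed regularity condition is that $\|R\|_2 \le \rho$ for some small $\rho$. This yields $\lambda_{\max}(\Theta) \le S_{\mathrm{stab}} + \rho$, where $S_{\mathrm{stab}}$ is the stabilized sharpness. By Theorem~\ref{app:model_theorem}, $S_{\mathrm{stab}} = \frac{2}{\eta} - \gamma$ below the transition and $S_{\mathrm{stab}} = \frac{2}{\eta} + \frac{\alpha}{\gamma} - c_y$ after the collapse.

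Next I bound $\lambda_{\max}(K_t)$ from below by its quadratic form on the top eigenvector of $X^\top X$. This gives $c_t^2 \lambda_{\max}(X^\top X) \le \lambda_{\max}(K_t) + \text{(model error)}$, so $c_t^2 \lesssim (S_{\mathrm{stab}} + \rho)/\lambda_{\max}(X^\top X)$.

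For $\|v_t\|^2$ I need a lower bound that does not shrink with $S_{\mathrm{stab}}$. I would take it from the output-tracking interpretation: $\|v_t\|^2$ is controlled by $\|\hat y\|$ and the fit of the network. As the residual shrinks, the outputs approach $Y$, so $\|v_t\|^2$ is bounded below by a constant independent of $\gamma$. This constant is supplied by the regularity assumption. Combining the two bounds gives $\alpha_t \gtrsim \mathrm{const}/S_{\mathrm{stab}}$, so a lower threshold yields a larger $\alpha_t$. Since the statement says "overall increase," I would phrase this as monotonicity of the upper bound on $c_t^2$, averaged over the EoS oscillation.

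\textbf{Part 2: alignment.} I write $K/c_t^2 = X^\top X + \alpha_t \hat y \hat y^\top$. I then apply the secular equation for a rank-one update: every eigenvalue $\mu$ not in the spectrum of $X^\top X$ satisfies
\[
1 = \alpha_t \sum_i \frac{\langle \hat y, w_i\rangle^2}{\mu - \sigma_i},
\]
where $(\sigma_i, w_i)$ are the eigenpairs of $X^\top X$. The corresponding eigenvector is proportional to $(\mu - X^\top X)^{-1}\hat y$.

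The top eigenvalue $\mu_1(\alpha_t)$ increases in $\alpha_t$. Its eigenvector's alignment with $\hat y$ is
\[
\frac{\langle \hat y, q_1\rangle^2}{\|\hat y\|^2} = \frac{\bigl(\sum_i \frac{a_i}{\mu_1 - \sigma_i}\bigr)^2}{\|\hat y\|^2 \sum_i \frac{a_i}{(\mu_1 - \sigma_i)^2}}, \qquad a_i = \langle \hat y, w_i\rangle^2.
\]
As $\mu_1$ grows, this ratio increases toward one, because the weights $1/(\mu_1 - \sigma_i)$ equalize. I would show monotonicity by differentiating in $\mu_1$ and using Cauchy--Schwarz; this is where the "appropriate range" qualifier enters. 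Relating $Y$ to $\hat y$ (taken equal, or close under small residual) gives the claim about the target vector.

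\textbf{Main obstacle.} The hard step is the lower bound on $\|v_t\|^2$ in Part 1. Reducing $c_t^2$ could in principle also reduce $\|v_t\|^2$, since weight decay shrinks the outputs. My first approach would be to state the regularity condition explicitly: a residual norm bound plus bounded-away-from-zero output norm. Then I would verify that the rank-1 model error is absorbed into $\rho$. Failing that, I would fall back on the corresponding hypothesis of Theorem 1 in \citep{ntk_eos}.
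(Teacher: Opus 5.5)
\textbf{Part 1 has a gap.} From $c_t^2\lambda_1\le\lambda_{\max}(K_t)\le S_{\mathrm{stab}}+\rho$ and a floor $v_{\min}^2$ on $\|v_t\|^2$, you only get a lower bound of the form $\alpha_t\ge v_{\min}^2\lambda_1/(S_{\mathrm{stab}}+\rho)$. A rising lower bound does not imply a rising $\alpha_t$. To compare two thresholds you also need an upper bound on $\alpha_t$ at the higher one, i.e.\ a \emph{lower} bound on $c_t^2$ there. The Weyl/quadratic-form argument cannot supply that on its own, because the top eigenvalue of $K_t=c_t^2X^\top X+v_tv_t^\top$ may be carried by the rank-one term. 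The paper instead uses the relation $S^*\approx\frac{\lambda_1}{n}c_t^2$ from \citep{ntk_eos}. This pins $(c_t^*)^2=nS^*/\lambda_1$ from both sides, implicitly assuming $c_t^2X^\top X$ carries the top eigenvalue. It then takes the growth of $\|v_t\|^2$ from Theorem~1(A),(B) of \citep{ntk_eos}, not from the fit. Your ``main obstacle'' comes from treating $v_t$ as the output. In the model $F_t=c_tv_t$, so $v_t$ is the output divided by $c_t$. Under your small-residual assumption, $\|v_t\|^2\approx\|Y\|^2/c_t^2$ and hence $\alpha_t\approx\|Y\|^2/c_t^4$. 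Shrinking $c_t$ therefore raises $\|v_t\|$ rather than threatening it. Combined with the two-sided pin, this gives $\alpha_t\approx\|Y\|^2\lambda_1^2/(n^2(S^*)^2)$, strictly decreasing in $S^*$. This repair of your route compares thresholds directly, which the within-run monotonicity of $\|v_t\|^2$ used in the paper does not literally do.

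\textbf{Part 2 proves a different statement.} Your secular-equation computation is correct, and it needs no range restriction. Let $W$ take the value $1/(\mu_1-\sigma_i)$ with probability $a_i$. Then the $\mu_1$-derivative of the alignment has the sign of $\mathbb{E}[W]\,\mathbb{E}[W^3]-\mathbb{E}[W^2]^2\ge 0$ by Cauchy--Schwarz. Since $\mu_1$ increases with $\alpha_t$, $\langle \tilde q_1,\hat y\rangle^2$ is nondecreasing for every $\alpha_t$ (given $a_1>0$). But this is not what the paper proves, nor where its ``appropriate range'' enters. Via Lemma~3 of \citep{ntk_eos}, under the spectral-gap data model, the paper shows that for $\alpha$ in the band \eqref{eq:alpha-band} the \emph{maximally} aligned eigenvector is $\tilde q_j$. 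Hence the argmax index decreases as $\alpha$ grows. Growth of the top eigenvector's alignment does not locate the argmax. What your route buys is an assumption-free statement. To make it speak about all leading eigenvectors, let $P_j(\alpha)$ be the projector onto the top-$j$ eigenspace. The map $\alpha\mapsto\sum_{i\le j}\mu_i(\alpha)$ is a maximum of affine functions (Ky Fan), hence convex, with derivative $\hat y^\top P_j(\alpha)\hat y$ wherever $\mu_j>\mu_{j+1}$. So the alignment with every top-$j$ eigenspace is nondecreasing. This is a majorization form of the shift, still weaker than the argmax claim. Finally, $\hat y=Y/\|Y\|$ by definition, so no residual argument is needed there. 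What does need justification is replacing $v_tv_t^\top$ by $\|v_t\|^2\hat y\hat y^\top$, which the paper takes from Theorem~1(B) of \citep{ntk_eos}.
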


\subsection{Setup and Structural Assumptions}

We work within the two-layer linear network framework of \citep{ntk_eos}, which we briefly recall. Concretely, we consider a network with output $F_t = W_t^{(2)} W_t^{(1)} X = c_t v_t$, where the rank-$1$ structure $W_t^{(1)} X \approx u v_t^T$, $W_t^{(2)} \approx c_t u^T$ is preserved throughout training (Assumption 1 and Appendix B.2 of \citep{ntk_eos}). Under this structure, the empirical NTK decomposes as
\[
    K_t = c_t^2 X^TX + v_t v_t^T.
\]
Let $\hat{y} = \frac{Y^T} {\|Y\|}$ be the unit target vector. At times $t_1$ (start of Phase III) and $t_2$ (end of Phase IV) in the EoS oscillation, Theorem 1(B) of \citep{ntk_eos} establishes that $\cos(v_{t_i}, Y) \geq 1 - \mathcal{O}(\delta_2)$ for small $\delta_2$, justifying the approximation $v_t \approx \|v_t\|\hat{y}$ at both endpoints. Substituting this into the equation above gives the rank-$1$ NTK form used in the main text:
\begin{equation}
    K_t \approx c_t^2 X^T X + \|v_t\|^2\hat{y}\hat{y}^T.
    \label{eq:ntk-rank1}
\end{equation}
We define $\alpha_t := \frac{\|v_t\|^2}{c_t^2}$ throughout.

\subsection{Effect of a Reduced EoS Threshold}

The main claim of the first part of Lemma \ref{lemma:ntk_connection} is that the phase transition caused by weight decay depresses the sharpness stabilization level well below the naive $\frac{2}{\eta} - \gamma$ bound and causes $\alpha_t$ to increase.

The sharpness in the two-layer linear setting satisfies $\lambda_{\max}(\nabla^2 L) \approx \frac{\lambda_1}{n} c_t^2$ (see Equation 2 of \citep{ntk_eos} and the surrounding discussion), where $\lambda_1$ is the leading eigenvalue of $X^T X$ and $n$ is the sample size. Therefore, a reduced EoS sharpness threshold $S^*$ implies a reduced steady-state value of $c_t^2$:
\[
    S^* \approx \frac{\lambda_1} {n}\left(c^*_t\right)^{2}
    \Longrightarrow
    \left(c^*_t\right)^{2} = \frac{n S^*}{\lambda_1}.
\]
Crucially, the weight decay phase transition depresses $S^*$ (Theorem \ref{app:model_theorem} and Section \ref{sec:model}), so $\left(c^*_t\right)^{2}$ is correspondingly reduced.

We now argue that $\|v_t\|^2$ does \emph{not} decrease such that it offsets the decrease in $c^2_t$. Theorem 1(A) of \citep{ntk_eos} shows that throughout Phase III, $\|v_{t+1}\|^2 > \|v_t\|^2$ at each step, driven by the $\eta^2$ correction term $\frac{\Delta_t \eta^2}{n}\lambda_1 \langle E_t, q_1\rangle^2$ which is \emph{positive} whenever $\Delta_t > 0$ (i.e., whenever $\frac{\lambda_1}{n}c_t^2 > \frac{2}{\eta}$). Moreover, Theorem 1(B) provides an \emph{overall} increase across Phases III and IV. Under Assumption 4 ($\|E_{t_2}\|^2 \leq \|E_{t_1}\|^2$) and the condition $\Delta_{t_1} \geq \Omega(\frac{\delta_2}{\eta})$, one obtains $\alpha_{t_2} > \alpha_{t_1}$.

Under weight decay, the phase transition forces the sharpness to stabilize at a threshold $S^*$ that is substantially lower than the baseline EoS level. This depresses $\left(c^*_t\right)^{2}$. Since the long-run behavior of $\|v_t\|^2$ is monotonically increasing (see the Phase I analysis in Appendix B.3 of \cite{ntk_eos} and the per-phase arguments above), while $c_t^2$ is pinned to a reduced oscillation center, the ratio
\[
    \alpha_t = \frac{\|v_t\|^2}{c_t^2}
\]
is strictly inflated by the reduction in $c_t^{2,*}$ induced by the lower sharpness threshold. This completes the argument for the first clause of Lemma \ref{lemma:ntk_connection}.

Note that the argument above is not merely correlational. The weight decay phase transition \emph{mechanically} suppresses $c_t^2$ via the oscillator collapse characterized in Theorem \ref{app:model_theorem} (Appendix \ref{app:model_analysis}), while the increasing trend of $\|v_t\|^2$ is an \emph{independent} consequence of the $\eta^2$-order correction that dominates Phase III dynamics. The two effects are decoupled, and their combination yields the inflation of $\alpha_t$.

\subsection{Resulting Shifts of Target Alignment to Leading Eigenvectors}

The second claim of Lemma 5.1 concerns the effect of an increased $\alpha_t$ on the alignment
between $Y$ and the leading eigenvectors of $K_t$.

Let the data kernel $X^T X$ be decomposed as $X^T X = Q\mathrm{diag}(\lambda_1, \ldots, \lambda_n)Q^T$, with $\lambda_1 \geq \cdots \geq \lambda_n \geq 0$, and let $b = Q^T \hat{y}$. Under the approximate rank-1 NTK structure \eqref{eq:ntk-rank1}, the NTK matrix (up to a factor of $c_t^2$) takes the form
\begin{equation}
    A(\alpha) := \mathrm{diag}(\lambda_1, \ldots, \lambda_n) + \alpha b b^T,
    \label{eq:A-alpha}
\end{equation}
where $\alpha = \alpha_t$ and $\|b\|^2 = \|\hat{y}\|^2 = 1$.

Lemma 3 of \citep{ntk_eos} characterizes the alignment structure of $A(\alpha)$ rigorously. Under the data distribution described in Section 4.1 of the same paper, in which the first $k$ eigenvalues are mutually separated by a factor of $\gamma > 1$, and the remaining $n-k$ eigenvalues are of order $\Theta(n^a)$ with $a \in (0,1)$, we have that for any $j \leq k-1$, if
\begin{equation}
    \frac{\lambda_j - \lambda_k}{1 - \mathcal{O}(\delta_\lambda + n^{\frac{-a}{2}})}
    <
    \alpha
    <
    \frac{\lambda_{j-1} - \lambda_k}{1 + \mathcal{O}(\delta_\lambda + n^{\frac{-a}{2}})},
    \label{eq:alpha-band}
\end{equation}
then the $j$-th eigenvector $\tilde{q}_j$ of $A(\alpha)$ is the one most aligned with $\hat{y}$,
i.e.,
\begin{equation}
    \operatornamewithlimits{argmax}_{1 \leq i \leq n}
    |\langle \tilde{q}_i, \hat{y} \rangle|
    = j.
    \label{eq:argmax-align}
\end{equation}
The mechanism driving this is the rank-one perturbation structure of $A(\alpha)$. By the Bunch-Nielsen-Sorensen formula \citep{bunch}, the eigenvalues of $A(\alpha)$ satisfy a secular equation, and as $\alpha$ increases, the $j$-th eigenvalue $\tilde\lambda_j$ is pushed progressively upward past successive gaps in the spectrum of $X^T X$. At each crossing $\tilde\lambda_j \approx \lambda_{j-1} - \lambda_k$, the corresponding eigenvector rotates to align more strongly with $b = Q^T \hat{y}$, i.e., with the target.

Since larger $\alpha$ corresponds to larger values of $j-1$ satisfying \eqref{eq:alpha-band}, increasing $\alpha_t$ shifts the index achieving \eqref{eq:argmax-align} to a \emph{smaller} value of $j$ (i.e., a \emph{more leading} eigenvector). This is the alignment shift phenomenon.
Thus:
\begin{equation}
    \alpha_{t_2} > \alpha_{t_1}
    \Longrightarrow
    \text{$Y$ aligns more strongly with earlier eigenvectors of $K_{t_2}$
          than of $K_{t_1}$.}
    \label{eq:alignment-shift}
\end{equation}

\subsection{Proof of Lemma 5.1}

\begin{proof}
\textbf{Part 1: Reduced sharpness threshold increases $\alpha_t$.}
By the two-layer linear network analysis of \citep{ntk_eos}, the sharpness satisfies $\lambda_{\max}(\nabla^2L) \approx \frac{\lambda_1}{n}c_t^2$ at the EoS, so the reduced stabilizing sharpness $S^*$ induced by the weight decay phase transition (Theorem \ref{app:model_theorem}) implies $\left(c^*_t\right)^{2} = \frac{nS^*}{\lambda_1}$, which is strictly smaller than the baseline value $c^2_t = \frac{2n}{(\eta\lambda_1)}$. Theorem 1(A) and (B) of \citep{ntk_eos} establish that $\|v_t\|^2$ increases monotonically across Phases I to III and has a net increase across the full Phase III to IV oscillation. Since $\left(c^*_t\right)^{2}$ is depressed while $\|v_t\|^2$ is non-decreasing, the ratio $\alpha_t = \frac{\|v_t\|^2}{c_t^2}$ is strictly larger under the reduced threshold. \medskip

\textbf{Part 2: Increased $\alpha_t$ shifts alignment to leading eigenvectors.} Lemma 3 of \citep{ntk_eos} establishes \eqref{eq:argmax-align} for each window of $\alpha$ satisfying \eqref{eq:alpha-band}. As $\alpha$ increases, the maximally aligned eigenvector index $j$ decreases (i.e., shifts to a more leading eigenvector). Therefore, $\alpha_{t_2} > \alpha_{t_1}$ implies the alignment described in \eqref{eq:alignment-shift}, completing the proof of both clauses.
\end{proof}

\subsection{Connection to the Main Argument of Section 5}

Lemma \ref{lemma:ntk_connection} provides the mechanism linking the parameter-space phenomenon (weight decay depressing
the EoS sharpness threshold) to the function-space consequence (improved NTK--target alignment).
The chain of reasoning is as follows.

\begin{enumerate}[label=(\roman*)]
    \item Weight decay induces a phase transition (Theorem \ref{app:model_theorem}) that pins the sharpness oscillation center at $S^* \ll \frac{2}{\eta} - \gamma$.
    \item This depresses $\left(c^*_t\right)^{2}$.
    \item By Theorem 1 of \citep{ntk_eos}, $\|v_t\|^2$ continues to increase throughout the EoS oscillation regardless of the sharpness level.
    \item Therefore $\alpha_t = \frac{\|v_t\|^2}{c_t^2}$ is inflated (Part 1 of Lemma \ref{lemma:ntk_connection}).
    \item By Lemma 3 of \citep{ntk_eos}, increased $\alpha_t$ shifts the leading NTK
          eigenvectors to align more strongly with $Y$ (Part 2 of Lemma \ref{lemma:ntk_connection}).
    \item Combined with the Weyl bound \eqref{eq:ntk_eos_bound} given in Section \ref{sec:ntk} (which ties the Hessian's EoS threshold to the NTK spectral radius as the residual $\|R\|_2 \to 0$), the restricted NTK spectrum and the improved alignment together yield the improved kernel target alignment (KTA) and feature learning quality documented empirically in Figure \ref{fig:ntk_eigenvalue_decrease}.
\end{enumerate}

We note that the precise conditions of Lemma 3 of \citep{ntk_eos}, the gap condition on $\{(\lambda_j - \lambda_k)\}$ and the evenly-spread assumption on $b$, hold in the data regime of Section 4.1 of that work. While our empirical evaluation uses full \texttt{cifar10} subsets with realistic data geometry, the lemma serves as a rigorous heuristic establishing the \emph{direction} of the alignment shift; quantitative predictions in richer data regimes are left for future work, as noted in Section \ref{sec:limitations} of the main text.

\section{Experimental Details}

\subsection{Data}
We trained on a subset of \texttt{cifar10} consisting of the first 5000 training examples in their canonical ordering. The resulting class distribution is approximately uniform, with per-class count ranging from 460-520 (ideal:500). Input images are normalized by subtracting the per-channel mean and dividing by per-channel standard deviation, both computed over the full 50 000 samples.  

\subsection{Architectures}
We conducted experiments on two architectures. The first is an MLP, with two hidden layers, ReLU activations, and a linear output layer of 10 units ($\sim 657k$ parameters). The second is a CNN with two convolutional layers of 32 filters each, kernel size $3\times 3$, each followed by ReLU activations, and $2\times2$ max pooling, and a final linear classifier ($\sim 31k$ parameters). Both architectures use bias terms. 

\subsection{Main experimental Setup}
For a fixed architecture with a fixed step size $\eta$, we trained the model on the \texttt{cifar10} subset using full batch gradient descent with mean squared error (MSE) loss, varying the weight decay parameter $\gamma$ in the range $[10^{-5},0.03]$. Training was run until the sharpness stabilized, requiring between 4000 and 25000 steps depending on the architecture and the level of weight decay (larger $\gamma$ requires more steps). The training loss was recorded at every iteration, while the sharpness ($\lambda_{max}(H)$), the progressive sharpening coefficient ($\alpha$), and the global interaction term ($c_y$) were recorded every 25 steps due to their computational cost. 

We compute sharpness as the top eigenvalue of the \emph{unregularized} training loss Hessian $\lambda_{max}(\nabla^2 L)$. Direct Hessian materialization for a model with $p$ parameters requires $\mathcal{O}(p^2)$ memory, which is infeasible even for relatively small networks. We therefore estimated the sharpness through Lanczos algorithm applied to Hessian vector products computed via second order automatic differentiation. To attain $c_y=\nabla_{\theta} \lambda_{max}(H)\cdot\theta$ and $\alpha=-\nabla_\theta L\cdot\nabla_{\theta} \lambda_{max}(H)$, we compute $\nabla_{\theta} \lambda_{max}(H)$ using the identity $\nabla_{\theta} \lambda_{max}(H)=\nabla_{\theta}(u^THu)$, where $u$ is the top eigenvector obtained from the Lanczos pass. This avoids any additional eigenvalue computation. 

All experiments were conducted on a single NVIDIA L4 GPU (24GB VRAM), with each training run having a runtime between 40 minutes and 2 hours (higher $\gamma$ required more steps). All experiments were run across multiple seeds. The qualitative findings reported are consistent across seeds (see Appendix \ref{seeds}). The main plots presented in section \ref{sec:empirical} are shown for a representative seed (seed 10). 

\subsection{Toy model}

For $\alpha>0$ and $\beta>0$, the toy model is defined by:
\begin{equation*}
    L(x,y,z)=\left( \frac{2}{\eta}+\sqrt{\beta}y \right)\frac{x^2}{2}-\frac{\alpha}{\sqrt{\beta}}y-z
\end{equation*}

This is the same toy example presented in \citep{damian_selfstab}.


\label{app:exp_details}



\end{document}